\definecolor{dpink}{RGB}{253,0,216}
\definecolor{c1}{RGB}{0,176,80}
\definecolor{c2}{RGB}{255,0,0}
\definecolor{c3}{RGB}{0,204,255}
\definecolor{ct}{RGB}{112,48,160}
\definecolor{cr}{RGB}{31,73,125}
\tikzstyle{rec} = [rounded corners, rectangle, draw=cr]
\tikzstyle{z1} = [diamond, draw=none, fill=c1, scale=0.8]
\tikzstyle{z2} = [circle, draw=none, fill=c2, scale=0.8]
\tikzstyle{z3} = [rectangle, draw=none, fill=c3, scale=0.8]
\tikzstyle{zt} = [regular polygon, regular polygon sides=3, draw=ct, fill=ct, scale=0.48]
\begin{document}

\title{Cost-Sensitive Label Embedding for Multi-Label Classification}
\subtitle{}
\author{Kuan-Hao Huang         \and
        Hsuan-Tien Lin
}


\institute{Kuan-Hao Huang, Hsuan-Tien Lin\at
              CSIE Department, National Taiwan University, Taipei, Taiwan \\
              \email{\{r03922062, htlin\}@csie.ntu.edu.tw}
}

\date{Received: date / Accepted: date}

\maketitle

\begin{abstract}
Label embedding (LE) is an important family of multi-label classification algorithms that digest the label information jointly for better performance. 
Different real-world applications evaluate performance by different cost functions of interest.
Current LE algorithms often aim to optimize one specific cost function, but they can suffer from bad performance with respect to other cost functions. 
In this paper, we resolve the performance issue by proposing a novel cost-sensitive LE algorithm that takes the cost function of interest into account.
The proposed algorithm, cost-sensitive label embedding with multidimensional scaling (CLEMS), approximates the cost information with the distances of the embedded vectors by using the classic multidimensional scaling approach for manifold learning.
CLEMS is able to deal with both symmetric and asymmetric cost functions, and effectively makes cost-sensitive decisions by nearest-neighbor decoding within the embedded vectors.
We derive theoretical results that justify how CLEMS achieves the desired cost-sensitivity. Furthermore, extensive experimental results demonstrate that CLEMS is significantly better than a wide spectrum of existing LE algorithms and state-of-the-art cost-sensitive algorithms across different cost functions.
\keywords{Multi-label classification, Cost-sensitive, Label embedding}
\end{abstract}

\section{Introduction}
The multi-label classification problem (MLC), which allows multiple labels
to be associated with each example, is an extension of the multi-class classification problem.
The MLC problem satisfies the demands of many real-world applications \citep{Carneiro2007image,Trohidis2008music,Barutcuoglu2006gene}.
Different applications usually need different criteria to evaluate the prediction performance of MLC algorithms.
Some popular criteria are Hamming loss, Rank loss, F1 score, and Accuracy score \citep{Tsoumakas2010score,Madjarov2012score2}.

Label embedding (LE) is an important family of MLC algorithms that jointly extract the information of all labels to improve the prediction performance.
LE algorithms automatically transform the original labels to an embedded space, which represents the hidden structure of the labels. After conducting learning within the embedded space, LE algorithms make more accurate predictions with the help of the hidden structure.

Existing LE algorithms can be grouped into two categories based on the dimension of the embedded space: label space dimension reduction (LSDR) and label space dimension expansion (LSDE).
LSDR algorithms \citep{Hsu2009cs,Kapoor2012bcs,Tai2012plst,Sun2011cca,Chen2012cplst,Yu2014leml,Lin2014faie,Balasubramanian2012landmark,Bi2013cssp,Bhatia2015sleec,Yeh2017c2ae} consider a low-dimensional embedded space for digesting the information between labels and conduct more effective learning.
In contrast to LSDR algorithms, LSDE algorithms \citep{Zhang2011ccaoc,Ferng2013ecc,Tsoumakas2011rakel} focus on a high-dimensional embedded space.
The additional dimensions can then be used to represent different angles of joint information between the labels to reach better performance.

While LE algorithms have become major tools for tackling the MLC problem,
most existing LE algorithms are designed to optimize only one or few specific criteria.
The algorithms may then suffer from bad performance with respect to other criteria.
Given that different applications demand different criteria, it is thus important to achieve cost (criterion) sensitivity to make MLC algorithms more realistic.
Cost-sensitive MLC (CSMLC) algorithms consider the criterion as an additional input, and take it into account either in the training or the predicting stage.
The additional input can then be used to guide the algorithm towards more realistic predictions.
CSMLC algorithms are attracting research attention in recent years \citep{Lo2011csrakel1,Lo2014csrakel2,Dembczynski2010pcc,Dembczynski2011pcc2,Li2014cft},
but to the best of our knowledge, there is no work on cost-sensitive label embedding (CSLE) algorithms yet.

In this paper, we study the design of CSLE algorithms, 
which take the intended criterion into account in the training stage to locate a cost-sensitive hidden structure in the embedded space.
The cost-sensitive hidden structure can then be used for more effective learning and more accurate predictions with respect to the criterion of interest.
Inspired by the finding that many of the existing LSDR algorithms can be viewed as linear manifold learning approaches, we propose to adopt manifold
learning for CSLE.
Nevertheless, to embed any general and possibly complicated criterion,
linear manifold learning may not be sophisticated enough. We thus start with
multidimensional scaling (MDS), one famous non-linear manifold learning approach, to propose a novel CSLE algorithm.
The proposed cost-sensitive label embedding with multidimensional scaling (CLEMS) algorithm embeds
the cost information within the distance measure of the embedded space. 
We further design a \textit{mirroring trick} for CLEMS to properly embed the possibly asymmetric criterion information within the symmetric distance measure.
We also design an efficient procedure that conquers the difficulty of making predictions through the non-linear cost-sensitive hidden structure.
Theoretical results justify that CLEMS achieves cost-sensitivity through learning in the MDS-embedded space.
Extensive empirical results demonstrate that CLEMS usually reaches better performance than leading LE algorithms across different criteria.
In addition, CLEMS also performs better than the state-of-the-art CSMLC algorithms \citep{Li2014cft,Dembczynski2010pcc,Dembczynski2011pcc2}.
The results suggest that CLEMS is a promising algorithm for CSMLC.

This paper is organized as follows.
Section~\ref{sec:csle} formalizes the CSLE problem and Section~\ref{sec:clems} illustrates the proposed algorithm along with theoretical justifications. 
We discuss the experimental results in Section~\ref{sec:exp} and conclude in Section~\ref{sec:final}.

\section{Cost-sensitive label embedding}
\label{sec:csle}
In multi-label classification (MLC), we denote the feature vector of an instance by $\mathbf{x} \in \mathcal{X} \subseteq \mathbb{R}^{d}$ and denote the label vector by $\mathbf{y} \in \mathcal{Y} \subseteq \{ 0,1 \}^{K}$ where $\mathbf{y}[i]=1$ if and only if the instance is associated with the $i$-th label.
Given the training instances $\mathcal{D} = \{(\mathbf{x}^{(n)}, \mathbf{y}^{(n)}) \}_{n=1}^{N}$, 
the goal of MLC algorithms is to train a predictor $h\colon \mathcal{X} \rightarrow \mathcal{Y}$ from~$\mathcal{D}$ in the training stage,
with the expectation that for any unseen testing instance $(\mathbf{x}, \mathbf{y})$,
the prediction $\tilde{\mathbf{y}} = h(\mathbf{x})$ can be close to the ground truth~$\mathbf{y}$.

A simple criterion for evaluating the closeness between~$\mathbf{y}$ and $\tilde{\mathbf{y}}$ is \textit{Hamming loss}$(\mathbf{y}, \tilde{\mathbf{y}}) = \frac{1}{K}\sum_{i=1}^{K} \llbracket \mathbf{y}[i] \neq \tilde{\mathbf{y}}[i] \rrbracket$.
It is worth noting that Hamming loss separately evaluates each label component of~$\tilde{\mathbf{y}}$.
There are other criteria that jointly evaluate all the label components of $\tilde{\mathbf{y}}$, such as F1 score, Rank loss, 0/1 loss, and Accuracy score \citep{Tsoumakas2010score,Madjarov2012score2}.

Arguably the simplest algorithm for MLC is binary relevance (BR) \citep{Tsoumakas2007br}.
BR separately trains a binary classifier for each label without considering the information of other labels.
In contrast to BR, label embedding~(LE) is an important family of MLC algorithms that \textit{jointly} use the information of all labels to achieve better prediction performance.
LE algorithms try to identify the hidden structure behind the labels.
In the training stage, instead of training a predictor~$h$ directly, LE algorithms first embed each $K$-dimensional label vector~$\mathbf{y}^{(n)}$ as an $M$-dimensional embedded vector $\mathbf{z}^{(n)} \in \mathcal{Z}\subseteq \mathbb{R}^{M}$ by an embedding function $\Phi\colon \mathcal{Y} \rightarrow \mathcal{Z}$.
The embedded vector $\mathbf{z}^{(n)}$ can be viewed as the hidden structure that contains the information pertaining to all labels.
Then, the algorithms train a internal predictor $g\colon \mathcal{X} \rightarrow \mathcal{Z}$ from $\{(\mathbf{x}^{(n)}, \mathbf{z}^{(n)}) \}_{n=1}^{N}$.
In the predicting stage, for the testing instance $\mathbf{x}$, LE algorithms obtain the predicted embedded vector $\tilde{\mathbf{z}} = g(\mathbf{x})$ and use a decoding function $\Psi\colon \mathcal{Z} \rightarrow \mathcal{Y}$ to get the prediction~$\tilde{\mathbf{y}}$.
In other words, LE algorithms learn the predictor by $h = \Psi \circ g$.
Figure~\ref{fig:le} illustrates the flow of LE algorithms.

\begin{figure}[t]
	\centering
	\begin{tikzpicture}[minimum height=1em, minimum width=1em, line width=0.5pt, draw=cr]
		\node [] at (0.8,-0.4) {label space $\mathcal{Y}$};
		\node [] at (4.16,-0.4) {embedded space $\mathcal{Z}$};
		\node [] at (7.12,-0.4) {feature space $\mathcal{X}$};
		\node [z1, scale=0.7] at (0.48,0.4) (y1) {};
		\node [z2, scale=0.7] at (0.0,1.2) (y2) {};
		\node [z3, scale=0.7] at (1.2,0.0) (y3) {};
		\draw [] (y2) -- (0.0,0.0) -- (y3);
		\draw [] (y2) -- (1.2,1.2) -- (y3);
		\draw [] (y2) -- (0.48,1.6) -- (1.68,1.6) -- (1.2,1.2);
		\draw [] (1.68,1.6) -- (1.68,0.4) -- (y3);
		\draw [densely dotted] (y1) -- (0.0,0.0);
		\draw [densely dotted] (y1) -- (0.48,1.6);
		\draw [densely dotted] (y1) -- (1.68,0.4);
		\draw [-Latex, draw=cr] (2.0,0.96) to (2.96,0.96);
		\draw [-Latex, draw=cr] (2.96,0.64) to (2.0,0.64);
		\node [] at (2.4,1.28) {\large $\Phi$};
		\node [] at (2.56,0.32) {\large $\Psi$};
		\node [rec, minimum height=5em,  minimum width=5.2em] at (4.16, 0.8) [] {};
		\node [z1, scale=0.7] at (4.32,1.28) (z1) {};
		\node [z2, scale=0.7] at (3.68,0.56) (z2) {};
		\node [z3, scale=0.7] at (4.64,0.40) (z3) {};
		\draw [, -Latex, draw=cr] (6.24,0.8) to (5.28,0.8);
		\node [] at (5.84,0.48) {\large $g$};
		\draw [] (7.12, 0.8) ellipse (1.6em and 2.4em);
	\end{tikzpicture}
	\caption{Flow of label embedding}
	\label{fig:le}
\end{figure}
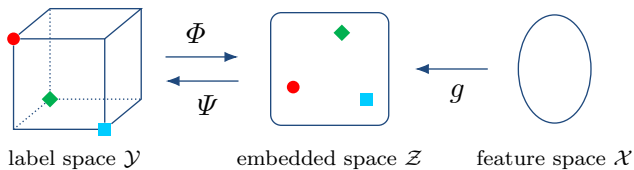

Existing LE algorithms can be grouped into two categories based on $M$ (the dimension of $\mathcal{Z}$) and $K$ (the dimension of~$\mathcal{Y}$).
LE algorithms that work with $M \leq K$ are termed as label space dimension reduction~(LSDR) algorithms. 
They consider a low-dimensional embedded space for digesting the information between labels and utilize different pairs of $(\Phi, \Psi)$ to conduct more effective learning.
Compressed sensing \citep{Hsu2009cs} and Bayesian compressed sensing \citep{Kapoor2012bcs} consider a random projection as~$\Phi$ and obtain $\Psi$ by solving an optimization problem per test instance.
Principal label space transformation \citep{Tai2012plst} considers~$\Phi$ calculated from an optimal linear projection of the label vectors and derives $\Psi$ accordingly.
Some other works also consider optimal linear projections as $\Phi$ but take feature vectors into account in the optimality criterion,
including canonical-correlation-analysis methods \citep{Sun2011cca}, 
conditional principal label space transformation \citep{Chen2012cplst},
low-rank empirical risk minimization for multi-label learning \citep{Yu2014leml},
and feature-aware implicit label space encoding \citep{Lin2014faie}.
Canonical-correlated autoencoder \citep{Yeh2017c2ae} extends the linear projection works by using neural networks instead.
Landmark selection method \citep{Balasubramanian2012landmark} and column subset selection \citep{Bi2013cssp} design $\Phi$ to select a subset of labels as embedded vectors and derive the corresponding $\Psi$.
Sparse local embeddings for extreme classification \citep{Bhatia2015sleec} trains a locally-linear projection as~$\Phi$ and constructs~$\Psi$ by nearest neighbors.
The smaller~$M$ in LSDR algorithms allows the internal predictor $g$ to be learned more efficiently and effectively.

Other LE algorithms work with $M > K$, which are called label space dimension expansion~(LSDE) algorithms.
Canonical-correlation-analysis output codes \citep{Zhang2011ccaoc} design $\Phi$ based on canonical correlation analysis to generate additional output codes to enhance the performance.
Error-correcting codes (ECC) algorithms \citep{Ferng2013ecc} utilize the encoding and decoding functions of standard error-correcting codes for communication as $\Phi$ and $\Psi$, respectively.
Random $k$-labelsets \citep{Tsoumakas2011rakel}, a popular algorithm for MLC, can be considered as an ECC-based algorithm with the repetition code \citep{Ferng2013ecc}.
LSDE algorithms use additional dimensions to represent different angles of joint information between the labels to reach the better performance.

To the best of our knowledge, all the existing LE algorithms above are designed for one or few specific criteria and may suffer from bad performance with respect to other criteria. 
For example, the optimality criterion within principal label space transformation \citep{Tai2012plst} is closely related to Hamming loss. For MLC data with very few non-zero $\mathbf{y}[i]$, which are commonly encountered in real-world applications, optimizing Hamming loss can easily results in all-zero predictions~$\tilde{\mathbf{y}}[i]$, which suffer from bad F1 score. 

MLC algorithms that take the evaluation criterion into account are called cost-sensitive MLC (CSMLC) algorithms and are attracting research attentions in recent years. 
CSMLC algorithms take the criterion as an additional input and consider it either in the training or the predicting stage.
For any given criterion, CSMLC algorithms can ideally make cost-sensitive predictions with respect to the criterion without extra efforts in algorithm design.
Generalized $k$-labelsets ensemble \citep{Lo2011csrakel1,Lo2014csrakel2} is extended from random $k$-labelsets \citep{Tsoumakas2011rakel} and digests the criterion by giving appropriate weights to labels.
The ensemble algorithm performs well for any weighted Hamming loss but cannot tackle more general criteria that jointly evaluate all the label components, such as F1 score.
Two CSMLC algorithms for arbitrary criterion are probabilistic classifier chain (PCC) \citep{Dembczynski2010pcc,Dembczynski2011pcc2} and condensed filter tree (CFT) \citep{Li2014cft}. 
PCC is based on estimating the probability of each label and making a Bayes-optimal inference for the evaluation criterion.
While PCC can in principle be used for any criterion, it may suffer from computational difficulty unless an efficient inference rule for the criterion is designed first.
CFT is based on converting the criterion as weights when learning each label. CFT conducts the weight-assignment in a more sophisticated manner than generalized $k$-labelsets ensemble does, and can hence work with
arbitrary criterion.
Both PCC and CFT are extended from classifier chain (CC) \citep{Read2011cc} and form a chain of labels to utilize the information of the earlier labels in the chain, but they cannot globally find the hidden structure of all labels like LE algorithms.

In this paper, we study the design of cost-sensitive label embedding (CSLE) algorithms that respect the criterion when calculating the embedding function~$\Phi$ and the decoding function~$\Psi$.
We take an initiative of studying CSLE algorithms, with the hope of achieving cost-sensitivity and finding the hidden structure at the same time.
More precisely, we take the following CSMLC setting \citep{Li2014cft}. 
Consider a cost function~$c( \mathbf{y}, \tilde{\mathbf{y}})$ which represents the penalty when the ground truth is $\mathbf{y}$ and the prediction is~$\tilde{\mathbf{y}}$.
We naturally assume that $c( \mathbf{y}, \tilde{\mathbf{y}}) \geq 0$, with value $0$ attained if and only if~$\mathbf{y}$ and $\tilde{\mathbf{y}}$ are the same.
Given training instances $\mathcal{D} = \{(\mathbf{x}^{(n)}, \mathbf{y}^{(n)}) \}_{n=1}^{N}$ and the cost function~$c(\cdot, \cdot)$, 
CSLE algorithms learn an embedding function $\Phi$, a decoding function~$\Psi$, and an internal predictor $g$, based on both the training instance~$\mathcal{D}$ and the cost function~$c(\cdot, \cdot)$.
The objective of CSLE algorithms is to  minimize the expected cost $c(\mathbf{y}, h(\mathbf{x}))$ for any unseen testing instance $(\mathbf{x}, \mathbf{y})$, where $h = \Psi \circ g$.

\section{Proposed algorithm}
\label{sec:clems}
We first discuss the difficulties of directly extending state-of-the-art LE algorithms for CSLE.
In particular, the decoding function $\Psi$ of many existing algorithms, such as conditional principal label space transformation \citep{Chen2012cplst} and feature-aware implicit label space encoding \citep{Lin2014faie}, are derived from $\Phi$ and can be divided into two steps.
The first step is using some $\psi \colon \mathcal{Z} \rightarrow \mathbb{R}^K$ that corresponds to~$\Phi$ to decode the embedded vector $\mathbf{z}$ to a real-valued vector~$\hat{\mathbf{y}} \in \mathbb{R}^K$;
the second step is choosing a threshold to transform $\hat{\mathbf{y}}$ to $\tilde{\mathbf{y}} \in \{0, 1\}^{K}$.
If the embedding function $\Phi$ is a linear function, the corresponding $\psi$ can be efficiently computed by pseudo-inverse.
However, for complicated cost functions, a linear function may not be sufficient to properly embed the cost information.
On the other hand, if the embedding function~$\Phi$ is a non-linear function, such as those within kernel principal component analysis \citep{scholkopf1998kpca} and kernel dependency estimation \citep{weston2002kde}, $\psi$ is often difficult to derive or time-consuming in calculation, 
which then makes $\Psi$ practically infeasible to compute.

To resolve the difficulties, we do not consider the two-step decoding function~$\Psi$ that depends on deriving~$\psi$ from~$\Phi$.
Instead, we first fix a decent decoding function~$\Psi$ and then derive the corresponding embedding function~$\Phi$.
We realize that the goal of $\Psi$ is simply to locate the most probable label vector~$\tilde{\mathbf{y}}$ from $\mathcal{Y}$, which is of a finite cardinality, based on the predicted embedded vector $\tilde{\mathbf{z}} = g(\mathbf{x}) \in \mathcal{Z}$.
If all the embedded vectors are sufficiently far away from each other in~$\mathcal{Z}$, one natural decoding function is to calculate the nearest neighbor $\mathbf{z}_q$ of $\tilde{\mathbf{z}}$ and return the corresponding~$\mathbf{y}_q$ as~$\tilde{\mathbf{y}}$. Such a nearest-neighbor decoding function $\Psi$ is behind some ECC-based LSDE algorithms \citep{Ferng2013ecc} and will be adopted. 

The nearest-neighbor decoding function $\Psi$ is based on the distance measure of $\mathcal{Z}$, 
which matches our primary need of representing the cost information.
In particular, if $\mathbf{y}_i$ is a lower-cost prediction than~$\mathbf{y}_j$ with respect to the ground truth $\mathbf{y}_t$, 
we hope that the corresponding embedded vector~$\mathbf{z}_i$ would be closer to~$\mathbf{z}_t$ than $\mathbf{z}_j$. Then, even if $g$ makes a small error such that $\tilde{\mathbf{z}} = g(\mathbf{x})$ deviates from the desired~$\mathbf{z}_t$, nearest-neighbor decoding function $\Psi$
can decode to the lower-cost~$\mathbf{y}_i$ as $\tilde{\mathbf{y}}$ instead of~$\mathbf{y}_j$.
In other words, for any two label vectors $\mathbf{y}_i, \mathbf{y}_j \in \mathcal{Y}$ and the corresponding embedded vectors $\mathbf{z}_i, \mathbf{z}_j \in \mathcal{Z}$, 
we want the Euclidean distance between~$\mathbf{z}_i$ and~$\mathbf{z}_j$, which is denoted by $d( \mathbf{z}_i, \mathbf{z}_j)$, to preserve the magnitude-relationship of the cost~$c(\mathbf{y}_i, \mathbf{y}_j)$.

Based on this objective, the framework of the proposed algorithm is as follows.
In the training stage, for each label vector $\mathbf{y}_i \in \mathcal{Y}$, the proposed algorithm determines an embedded vector $\mathbf{z}_i$ such that
the distance between two embedded vectors $d( \mathbf{z}_i, \mathbf{z}_j)$ in $\mathcal{Z}$ approximates the transformed cost $\delta (c( \mathbf{y}_i, \mathbf{y}_j))$, where $\delta (\cdot)$ is a monotonic transform function to preserve the magnitude-relationship and will be discussed later.
We let the embedding function $\Phi$ be the mapping $\mathbf{y}_i \rightarrow \mathbf{z}_i$ and use $\mathcal{Q}$ to represent the embedded vector set $\{\Phi(\mathbf{y}_i) \: | \: \mathbf{y}_i \in \mathcal{Y} \}$.
Then the algorithm trains a regressor $g\colon \mathcal{X} \rightarrow \mathcal{Z}$ as the internal predictor.

In the predicting stage, when receiving a testing instance~$\mathbf{x}$, 
the algorithm obtains the predicted embedded vector $\mathbf{\tilde{z}} = g(\mathbf{x})$.
Given that the cost information is embedded in the distance, for each $\mathbf{z}_i \in \mathcal{Q}$, the distance $d( \mathbf{z}_i, \mathbf{\tilde{z}})$ can be viewed as the estimated cost if the underlying truth is $\mathbf{y}_i$.
Hence the algorithm finds $\mathbf{z}_q \in \mathcal{Q}$ such that the distance $d( \mathbf{z}_q, \mathbf{\tilde{z}})$ is the smallest (the smallest estimated cost),
and lets the corresponding $\mathbf{y}_q = \Phi^{-1}(\mathbf{z}_q) = \tilde{\mathbf{y}}$ be the final prediction for $\mathbf{x}$.
In other words, we have a nearest-neighbor-based~$\Psi$, with the first step being the determination of the nearest-neighbor of~ $\tilde{\mathbf{z}}$ and the second step being the utilization of $\Phi^{-1}$ to obtain the prediction~$\tilde{\mathbf{y}}$.

Three key issues of the framework above are yet to be addressed.
The first issue is the determination of the embedded vectors~$\mathbf{z}_i$.
The second issue is using the symmetric distance measure to embed the possibly asymmetric cost functions where $c( \mathbf{y}_i, \mathbf{y}_j) \neq c( \mathbf{y}_j, \mathbf{y}_i)$.
The last issue is the choice of a proper monotonic transform function $\delta(\cdot)$. The issues will be discussed in the following sub-sections.

\subsection{Calculating the embedded vectors by multidimensional scaling}

As mentioned above, our objective is to determine embedded vectors $\mathbf{z}_i$ such that the distance $d( \mathbf{z}_i, \mathbf{z}_j)$ approximates the transformed cost $\delta (c( \mathbf{y}_i, \mathbf{y}_j))$. The objective can be formally defined as minimizing the
\textit{embedding error} $$(d(\mathbf{z}_i, \mathbf{z}_j) - \delta (c(\mathbf{y}_i, \mathbf{y}_j)))^2.$$
We observe that the transformed cost $\delta (c( \mathbf{y}_i, \mathbf{y}_j))$ can be viewed as the dissimilarity between label vectors $\mathbf{y}_i$ and $\mathbf{y}_j$.
Computing an embedding based on the dissimilarity information matches the task of manifold learning, which is able to preserve the information and discover the hidden structure. 
Based on our discussions above, any approach that solves the manifold learning task can then be taken to solve the CSLE problem. Nevertheless, for CSLE, different cost functions may need different~$M$ (the dimension of~$\mathcal{Z}$) to achieve a decent embedding. We thus
consider manifold learning approaches that can flexibly take~$M$ as the parameter, and 
adopt a classic manifold learning approach called multidimensional scaling (MDS) \citep{kruskal1964mds}.


For a target dimension~$M$, MDS attempts to discover the hidden structure of~$L_{\scriptscriptstyle MDS}$ objects by embedding their dissimilarities in an $M$-dimensional target space.
The dissimilarity is represented by a symmetric, non-negative, and zero-diagonal dissimilarity matrix $\mathbf{\Delta}$, which is an $L_{\scriptscriptstyle MDS} \times L_{\scriptscriptstyle MDS}$ matrix with $\mathbf{\Delta}_{i, j}$ being the dissimilarity between the $i$-th object and the $j$-th object.
The objective of MDS is to determine target vectors $\mathbf{u}_1, \mathbf{u}_2, ... , \mathbf{u}_{L_{\scriptscriptstyle MDS}}$ in the target space to minimize the \textit{stress}, which is defined as $\sum_{i, j} \mathbf{W}_{i, j}( d(\mathbf{u}_i, \mathbf{u}_j) - \mathbf{\Delta}_{i, j})^2$, where $d$ denotes the Euclidean distance in the target space, and $\mathbf{W}$ is a symmetric, non-negative, and zero-diagonal matrix that carries the weight~$\mathbf{W}_{i, j}$ of each object pair.
There are several algorithms available in the literature for solving MDS.
A representative algorithm is Scaling by MAjorizing a COmplicated Function~(SMACOF) \citep{De1977smacof}, which can iteratively minimize \textit{stress}. The complexity of SMACOF is generally $\mathcal{O}((L_{\scriptscriptstyle MDS})^3)$, but there is often room for speeding up with special weight matrices $\mathbf{W}$.

The \textit{embedding error} $(d(\mathbf{z}_i, \mathbf{z}_j) - \delta (c(\mathbf{y}_i, \mathbf{y}_j)))^2$ and the \textit{stress} $( d(\mathbf{u}_i, \mathbf{u}_j) - \mathbf{\Delta}_{i, j})^2$ are of very similar form.
Therefore, we can view the transformed costs as the dissimilarities of embedded vectors and feed MDS with specific values of $\mathbf{\Delta}$ and $\mathbf{W}$ to
calculate the embedded vectors to reduce the \textit{embedding error}.
Specifically, the relation between MDS and our objective can be described as follows.

\begin{table}[!ht]
	\centering
	\begin{tabular}{|c|c|c|c|c|}
		\hline
		\multirow{2}{*}{$i$-th object} & \multirow{2}{*}{dissimilarity $\mathbf{\Delta}_{i, j}$}   & \multirow{2}{*}{target vector $\mathbf{u}_i$} & \textit{stress}  \\
		                                &  &  & $( d(\mathbf{u}_i, \mathbf{u}_j) - \mathbf{\Delta}_{i, j})^2$ \\
		\hline
		\multirow{2}{*}{label vector $\mathbf{y}_i$} & transformed cost                          & \multirow{2}{*}{embedded vector $\mathbf{z}_i$} & \textit{embedding error} \\
		                                & $\delta (c( \mathbf{y}_i, \mathbf{y}_j))$ &  & $(d(\mathbf{z}_i, \mathbf{z}_j) - \delta (c(\mathbf{y}_i, \mathbf{y}_j)))^2$ \\
		\hline
	
	\end{tabular}
\end{table}

The most complete embedding would convert all label vectors $\mathbf{y} \in \mathcal{Y} \subseteq \{0, 1\}^K$ to the embedded vectors.
Nevertheless, the number of all label vectors is $2^{K}$, which makes solving MDS infeasible.
Therefore, we do not consider embedding the entire~$\mathcal{Y}$.
Instead, we select some representative label vectors as a candidate set $\mathcal{S} \subseteq \mathcal{Y}$, and only embed the label vectors in $\mathcal{S}$.
While the use of $\mathcal{S}$ instead of $\mathcal{Y}$ restricts the nearest-neighbor decoding function to only predict from $\mathcal{S}$, 
it can reduce the computational burden.
One reasonable choice of~$\mathcal{S}$ is the set of label vectors that appear in the training instances~$\mathcal{D}$, which is denoted as $\mathcal{S}_{tr}$.
We will show that using $\mathcal{S}_{tr}$ as~$\mathcal{S}$ readily leads to promising performance and discuss more about the choice of the candidate set in Section~\ref{sec:exp}.

After choosing $\mathcal{S}$, we can construct $\mathbf{\Delta}$ and~$\mathbf{W}$ for solving MDS.
Let $L$ denote the number of elements in $\mathcal{S}$ and let $\mathbf{C}(\mathcal{S})$ be the transformed cost matrix of~$\mathcal{S}$, which is an $L \times L$ matrix with $\mathbf{C}(\mathcal{S})_{i, j} = \delta (c(\mathbf{y}_i, \mathbf{y}_j))$ for $\mathbf{y}_i, \mathbf{y}_j \in \mathcal{S}$.
Unfortunately, $\mathbf{C}(\mathcal{S})$ cannot be directly used as the symmetric dissimilarity matrix $\mathbf{\Delta}$ because the cost function $c(\cdot, \cdot)$ may be asymmetric ($c(\mathbf{y}_i, \mathbf{y}_j) \neq c(\mathbf{y}_j, \mathbf{y}_i)$).
To resolve this difficulty, we propose a \emph{mirroring trick} to construct a symmetric $\mathbf{\Delta}$ from~$\mathbf{C}(\mathcal{S})$. 

\subsection{Mirroring trick for asymmetric cost function}

The asymmetric cost function implies that each label vector $\mathbf{y}_i$ serves two roles: as the ground truth, or as the prediction.
When $\mathbf{y}_i$ serves as the ground truth, we should use $c(\mathbf{y}_i, \cdot)$ to describe the cost behavior.
When $\mathbf{y}_i$ serves as the prediction, we should use $c(\cdot, \mathbf{y}_i)$ to describe the cost behavior.
This motivates us to view these two roles separately.

For each $\mathbf{y}_i \in \mathcal{S}$, we mirror it as~$\mathbf{y}^{(t)}_i$ and~$\mathbf{y}^{(p)}_i$ to denote viewing $\mathbf{y}_i$ as the ground truth and the prediction, respectively. 
Note that the two mirrored label vectors $\mathbf{y}^{(t)}_i$ and $\mathbf{y}^{(p)}_i$ are in fact the same, but carry different meanings.
Now, we have two roles of the candidate sets $\mathcal{S}^{(t)} = \{\mathbf{y}^{(t)}_i\}_{i=1}^{L}$ and $\mathcal{S}^{(p)} = \{\mathbf{y}^{(p)}_i\}_{i=1}^{L}$.
Then, as illustrated by Figure~\ref{fig:embedding}, $\delta (c(\mathbf{y}_i, \mathbf{y}_j))$, the transformed cost when $\mathbf{y}_i$ is ground truth and $\mathbf{y}_j$ is the prediction, can be viewed as the dissimilarity between the ground truth role $\mathbf{y}^{(t)}_i$ and the prediction role~$\mathbf{y}^{(p)}_j$, which is symmetric for them.
Similarly, $\delta (c(\mathbf{y}_j, \mathbf{y}_i))$ can be viewed as the dissimilarity between prediction role~$\mathbf{y}^{(p)}_i$ and ground truth role $\mathbf{y}^{(t)}_j$.
That is, all the asymmetric transformed costs can be viewed as the dissimilarities between the label vectors in $\mathcal{S}^{(t)}$ and $\mathcal{S}^{(p)}$.

\begin{figure}[t]
	\begin{minipage}[b]{.52\columnwidth}
		\begin{tikzpicture}[minimum height=1em, minimum width=1em, line width=0.5pt, draw=cr]
			\node [rec, minimum height=6.5em,  minimum width=17em] at (4.0, 1.0) [] {};
			\node [] at (4.0, -0.3) {embedded space $\mathcal{Z}$};
			\node [z1] at (5.8,0.9) (zp1) {};
			\node [z2] at (4.4,1.7) (zp2) {};
			\node [z3] at (4.8,0.4) (zp3) {};
			\node [] at (6.1,0.6) {{\color{black!70} $\mathbf{z}^{(p)}_1$}};
			\node [] at (4.9,1.7) {{\color{black!70} $\mathbf{z}^{(p)}_2$}};
			\node [] at (5.3,0.4) {{\color{black!70} $\mathbf{z}^{(p)}_3$}};
			\node [z1, fill=none, draw=c1] at (3.1,0.6) (zt1) {};
			\node [z2, fill=none, draw=c2] at (2.6,1.7) (zt2) {};
			\node [z3, fill=none, draw=c3] at (2.4,0.3) (zt3) {};
			\node [] at (3.3,0.3) {{\color{black!70} $\mathbf{z}^{(t)}_1$}};
			\node [] at (2.1,1.7) {{\color{black!70} $\mathbf{z}^{(t)}_2$}};
			\node [] at (1.95,0.3) {{\color{black!70} $\mathbf{z}^{(t)}_3$}};
			\draw [draw=red!30!yellow] (zt1) edge node {} (zp2);
			\draw [draw=red!30!yellow] (zt2) edge node {} (zp1);
			\node [text=blue!60!red] at (2.8,1.0) {$\delta (c(\mathbf{y}_1, \mathbf{y}_2))$};
			\node [text=blue!60!red] at (5.0,1.3) {$\delta (c(\mathbf{y}_2, \mathbf{y}_1))$};
		\end{tikzpicture}
		\caption{Embedding cost in distance}
		\label{fig:embedding}
	\end{minipage}
	\begin{minipage}[b]{.46\columnwidth}
		\subfigure[$\mathbf{\Delta}$]{\includegraphics[width=.40\columnwidth]{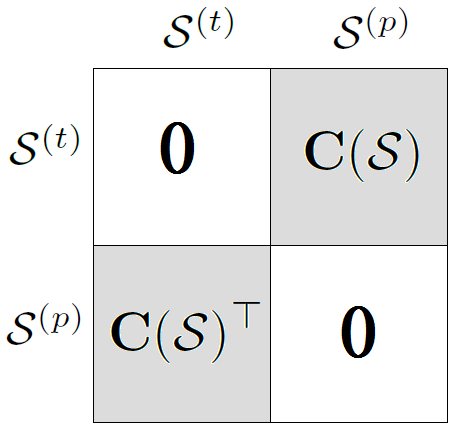}}
		\hspace*{0.5em}
		\subfigure[$\mathbf{W}$]{\includegraphics[width=.40\columnwidth]{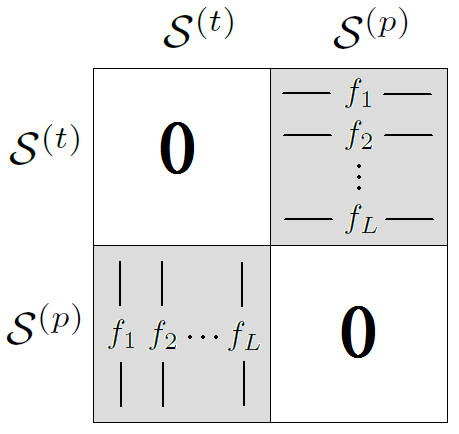}}
		\caption{Constructions of $\mathbf{\Delta}$ and $\mathbf{W}$}
		\label{fig:mat}
	\end{minipage}

\end{figure}

Based on this view, instead of embedding $\mathcal{S}$ by MDS, we embed both $\mathcal{S}^{(t)}$ and~$\mathcal{S}^{(p)}$ by considering
$2 L$ objects, the first~$L$ objects being the elements in~$\mathcal{S}^{(t)}$ and the last~$L$ objects being the elements in $\mathcal{S}^{(p)}$.
Following the mirroring step above, we construct symmetric $\mathbf{\Delta}$ and~$\mathbf{W}$ as $2 L \times 2 L$ matrices by the following equations and illustrate the constructions by Figure~\ref{fig:mat}.
\begin{equation}
\label{eq:delta}
\mathbf{\Delta}_{i,j} = \begin{cases}
	\delta (c(\mathbf{y}_i, \mathbf{y}_{j-L})) & \text{if $(i,j)$ in top-right part} \\
	\delta (c(\mathbf{y}_{i-L}, \mathbf{y}_j)) & \text{if $(i,j)$ in bottom-left part}  \\
	0 & \text{otherwise} \\
	\end{cases}
\end{equation}
\begin{equation}
\label{eq:weight}
\mathbf{W}_{i,j} = \begin{cases}
	f_i & \text{if $(i,j)$ in top-right part} \\
	f_j & \text{if $(i,j)$ in bottom-left part} \\
	0 & \text{otherwise} \\
	\end{cases}
\end{equation}
We explain the constructions and the new notations $f_i$ as follows. Given that we are concerned only about the dissimilarities between the elements in $\mathcal{S}^{(t)}$ and $\mathcal{S}^{(p)}$, we set the top-left and the bottom-right parts of $\mathbf{W}$ to zeros (and set the corresponding parts of $\mathbf{\Delta}$ conveniently to zeros as well). Then,
we set the top-right part and the bottom-left part of $\mathbf{\Delta}$ to be the transformed costs to reflect the dissimilarities.
The top-right part and the bottom-left part of $\mathbf{\Delta}$ are in fact~$\mathbf{C}(\mathcal{S})$ and $\mathbf{C}(\mathcal{S})^\top$ respectively, as illustrated by Figure~\ref{fig:mat}.
Considering that every label vector may have different importance, to reflect this difference, we set the top-right part of weight $\mathbf{W}_{i,j}$ to be $f_i$, the frequency of $\mathbf{y}_i$ in $\mathcal{D}$, and set the bottom-left part of weight $\mathbf{W}_{i,j}$ to be $f_j$.

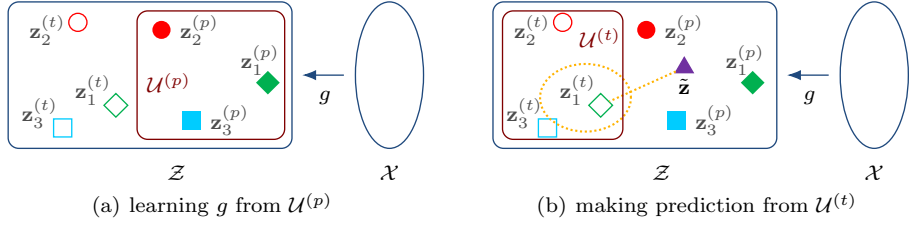
\begin{figure}[t]
	\centering
	\subfigure[learning $g$ from $\mathcal{U}^{(p)}$\label{fig:flow_train}]{
		\begin{tikzpicture}[minimum height=1em, minimum width=1em, line width=0.5pt, draw=cr]
			\node [rec, minimum height=6.5em,  minimum width=12.5em] at (3.35, 1.0) [] {};
			\node [] at (3.7, -0.3) {$\mathcal{Z}$};
			\node [z1] at (4.9,0.9) (zp1) {};
			\node [z2] at (3.5,1.6) (zp2) {};
			\node [z3] at (3.9,0.4) (zp3) {};
			\node [] at (4.8,1.2) {{\color{black!70} $\mathbf{z}^{(p)}_1$}};
			\node [] at (4.0,1.6) {{\color{black!70} $\mathbf{z}^{(p)}_2$}};
			\node [] at (4.4,0.4) {{\color{black!70} $\mathbf{z}^{(p)}_3$}};
			\node [z1, fill=none, draw=c1] at (2.9,0.6) (zt1) {};
			\node [z2, fill=none, draw=c2] at (2.4,1.7) (zt2) {};
			\node [z3, fill=none, draw=c3] at (2.2,0.3) (zt3) {};
			\node [] at (2.6,0.8) {{\color{black!70} $\mathbf{z}^{(t)}_1$}};
			\node [] at (2.0,1.6) {{\color{black!70} $\mathbf{z}^{(t)}_2$}};
			\node [] at (1.9,0.5) {{\color{black!70} $\mathbf{z}^{(t)}_3$}};
			\draw [-Latex, draw=cr] (5.9,1.0) to (5.35,1.0);
			\node [] at (5.65,0.7) {$g$};
			\node [] at (6.5,-0.3) {$\mathcal{X}$};
			\draw [] (6.5, 1.0) ellipse (1.5em and 3.25em);
			\node [] at (3.6,0.9) {{\color{red!50!black} $\mathcal{U}^{(p)}$}};
			\node [rec, minimum height=5.7em,  minimum width=6.5em, draw=red!50!black] at (4.15,1.0) [] {};
		\end{tikzpicture}}
	\hspace{2.0em}
	\subfigure[making prediction from $\mathcal{U}^{(t)}$\label{fig:flow_predict}]{
		\begin{tikzpicture}[minimum height=1em, minimum width=1em, line width=0.5pt, draw=cr]
			\node [rec, minimum height=6.5em,  minimum width=12.5em] at (3.35, 1.0) [] {};
			\node [] at (3.7, -0.3) {$\mathcal{Z}$};
			\node [z1] at (4.9,0.9) (zp1) {};
			\node [z2] at (3.5,1.6) (zp2) {};
			\node [z3] at (3.9,0.4) (zp3) {};
			\node [] at (4.8,1.2) {{\color{black!70} $\mathbf{z}^{(p)}_1$}};
			\node [] at (4.0,1.6) {{\color{black!70} $\mathbf{z}^{(p)}_2$}};
			\node [] at (4.4,0.4) {{\color{black!70} $\mathbf{z}^{(p)}_3$}};
			\node [z1, fill=none, draw=c1] at (2.9,0.6) (zt1) {};
			\node [z2, fill=none, draw=c2] at (2.4,1.7) (zt2) {};
			\node [z3, fill=none, draw=c3] at (2.2,0.3) (zt3) {};
			\node [] at (2.6,0.8) {{\color{black!70} $\mathbf{z}^{(t)}_1$}};
			\node [] at (2.0,1.6) {{\color{black!70} $\mathbf{z}^{(t)}_2$}};
			\node [] at (1.9,0.5) {{\color{black!70} $\mathbf{z}^{(t)}_3$}};
			\draw [-Latex, draw=cr] (5.9,1.0) to (5.35,1.0);
			\node [] at (5.65,0.7) {$g$};
			\node [] at (6.5,-0.3) {$\mathcal{X}$};
			\draw [] (6.5, 1.0) ellipse (1.5em and 3.25em);
			\node [zt] at (4.0,1.1) (zt) {};
			\node [] at (4.0,0.85) {$\tilde{\mathbf{z}}$};
			\draw [draw=red!30!yellow, densely dotted, line width=0.8pt] (zt) edge node {} (zt1);
			\draw [draw=red!30!yellow, densely dotted, line width=0.8pt] (2.7, 0.7) ellipse (2.0em and 1.5em);
			\node [] at (2.9,1.5) {{\color{red!50!black} $\mathcal{U}^{(t)}$}};
			\node [rec, minimum height=5.7em,  minimum width=5.3em, draw=red!50!black] at (2.4,1.0) [] {};
		\end{tikzpicture}}
	\caption{Different use of two roles of embedded vectors}
\end{figure}

By solving MDS with the above-mentioned $\mathbf{\Delta}$ and $\mathbf{W}$, we can obtain the target vector $\mathbf{u}^{(t)}_i$ and $\mathbf{u}^{(p)}_i$ corresponding to $\mathbf{y}^{(t)}_i$ and $\mathbf{y}^{(p)}_i$.
We take $\mathcal{U}^{(t)}$ and $\mathcal{U}^{(p)}$ to denote the target vector sets $\{\mathbf{u}^{(t)}_i\}_{i=1}^{L}$ and $\{\mathbf{u}^{(p)}_i\}_{i=1}^{L}$, respectively.
Those target vectors minimize $\sum_{i, j} \mathbf{W}_{i,j}(d(\mathbf{u}^{(t)}_i, \mathbf{u}^{(p)}_j) - \delta (c(\mathbf{y}_i, \mathbf{y}_j)) )^2$.
That is, the cost information is embedded in the distances between the elements in $\mathcal{U}^{(t)}$ and $\mathcal{U}^{(p)}$.

Since we mirror each label vector $\mathbf{y}_i$ as two roles ($\mathbf{y}^{(t)}_i$ and $\mathbf{y}^{(p)}_i$), we need to decide which target vector ($\mathbf{u}^{(t)}_i$ and $\mathbf{u}^{(p)}_i$) is the embedded vector $\mathbf{z}_i$ of $\mathbf{y}_i$.
Recall that the goal of the embedded vectors is to train a internal predictor $g$ and obtain~$\tilde{\mathbf{z}}$, the ``predicted'' embedded vector.
Therefore, we take the elements in $\mathcal{U}^{(p)}$, which serve the role of the prediction, as the embedded vectors of the elements in $\mathcal{S}$, as illustrated by Figure~\ref{fig:flow_train}.
Accordingly, the nearest embedded vector $\mathbf{z}_q$ should be the role of the ground truth because the cost information is embedded in the distance between these two roles of target vectors.
Hence, we take~$\mathcal{U}^{(t)}$ as $\mathcal{Q}$, the embedded vector set in the first step of nearest-neighbor decoding, and find the nearest embedded vector $\mathbf{z}_q$ from $\mathcal{Q}$, as illustrated by Figure~\ref{fig:flow_predict}.
The final cost-sensitive prediction $\tilde{\mathbf{y}} = \mathbf{y}_q$ is the corresponding label vector to $\mathbf{z}_q$, which carries the cost information through nearest-neighbor decoding.

\begin{algorithm}[t]
\caption{Training process of CLEMS}
\label{alg:train}
\begin{algorithmic}[1]
	\STATE Given $\mathcal{D} = \{(\mathbf{x}^{(n)}, \mathbf{y}^{(n)}) \}_{n=1}^{N}$, cost function $c$, and embedded dimension $M$
	\STATE Decide the candidate set $\mathcal{S}$, and calculate $\mathbf{\Delta}$ and $\mathbf{W}$ by (\ref{eq:delta}) and (\ref{eq:weight})
	\STATE Solve MDS with $\mathbf{\Delta}$ and $\mathbf{W}$, 
		 and obtain the two roles of embedding vectors $\mathcal{U}^{(t)}$ and $\mathcal{U}^{(p)}$
	\STATE Set embedding function $\Phi\colon \mathcal{S} \rightarrow \mathcal{U}^{(p)}$ and embedded vector set $\mathcal{Q} = \mathcal{U}^{(t)}$
	\STATE Train a regressor $g$ from $\{(\mathbf{x}^{(n)}, \Phi(\mathbf{y}^{(n)})) \}_{n=1}^{N}$
\end{algorithmic}
\end{algorithm}

\begin{algorithm}[t]
\caption{Predicting process of CLEMS}
\label{alg:predict}
\begin{algorithmic}[1]
	\STATE Given a testing example $\mathbf{x}$
	\STATE Obtain the predicted embedded vector $\tilde{\mathbf{z}}=g(\mathbf{x})$
	\STATE Find $\mathbf{z}_q \in \mathcal{Q}$ such that $d( \mathbf{z}_q, \mathbf{\tilde{z}})$ is the smallest
	\STATE Make prediction $\tilde{\mathbf{y}}=\Phi^{-1}(\mathbf{z}_q)$
\end{algorithmic}
\end{algorithm}

With the embedding function $\Phi$ using $\mathcal{U}^{(p)}$ and the nearest-neighbor decoding function $\Psi$ using $\mathcal{Q} = \mathcal{U}^{(t)}$, we have now designed a novel CSLE algorithm. We name it cost-sensitive label embedding with multidimensional scaling (CLEMS).
Algorithm \ref{alg:train} and Algorithm \ref{alg:predict} respectively list the training process and the predicting process of CLEMS.

\subsection{Theoretical guarantee and monotonic function}
The last issue is how to choose the monotonic transform function $\delta(\cdot)$.
We suggest a proper monotonic function $\delta(\cdot)$ based on the following theoretical results.

\begin{theorem}
\label{thm:bound}
For any instance $(\mathbf{x}, \mathbf{y})$, let $\mathbf{z}$ be the embedded vector of $\mathbf{y}$,
$\tilde{\mathbf{z}} = g(\mathbf{x})$ be the predicted embedded vector, 
$\mathbf{z}_q$ be the nearest embedded vector of $\tilde{\mathbf{z}}$,
and $\mathbf{y}_q$ be the corresponding label vector of $\mathbf{z}_q$. In other words, $\mathbf{y}_q$ is the outcome of the nearest-neighbor decoding function~$\Psi$.
Then,
\[\delta (c(\mathbf{y}, \mathbf{y}_q))^2 \leq 5 \Bigl(\underbrace{(d(\mathbf{z}, \mathbf{z}_q) - \delta (c(\mathbf{y}, \mathbf{y}_q)))^2}_{\mbox{embedding error}} + \underbrace{d(\mathbf{z}, \tilde{\mathbf{z}})^2}_{\mbox{regression error}}\Bigr).\]
\end{theorem}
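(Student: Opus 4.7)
The plan is to bound $\delta(c(\mathbf{y}, \mathbf{y}_q))$ by the sum of the embedding discrepancy $|d(\mathbf{z}, \mathbf{z}_q) - \delta(c(\mathbf{y}, \mathbf{y}_q))|$ plus a suitable multiple of the regression error $d(\mathbf{z}, \tilde{\mathbf{z}})$, and then to square and apply a weighted AM-GM inequality in order to recover the factor of $5$ with the same constant on both error terms.

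First, I would use the trivial decomposition
\[\delta(c(\mathbf{y}, \mathbf{y}_q)) = \bigl[\delta(c(\mathbf{y}, \mathbf{y}_q)) - d(\mathbf{z}, \mathbf{z}_q)\bigr] + d(\mathbf{z}, \mathbf{z}_q)\]
and control the second summand via the triangle inequality $d(\mathbf{z}, \mathbf{z}_q) \leq d(\mathbf{z}, \tilde{\mathbf{z}}) + d(\tilde{\mathbf{z}}, \mathbf{z}_q)$. The crucial observation, which is where the structure of the algorithm enters, is that $\mathbf{z}_q$ is the nearest neighbor of $\tilde{\mathbf{z}}$ inside $\mathcal{Q}$ and $\mathbf{z} = \Phi(\mathbf{y}) \in \mathcal{Q}$, so $d(\tilde{\mathbf{z}}, \mathbf{z}_q) \leq d(\tilde{\mathbf{z}}, \mathbf{z})$ and hence $d(\mathbf{z}, \mathbf{z}_q) \leq 2\, d(\mathbf{z}, \tilde{\mathbf{z}})$. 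Combined with the decomposition, this yields
\[\delta(c(\mathbf{y}, \mathbf{y}_q)) \leq \bigl|d(\mathbf{z}, \mathbf{z}_q) - \delta(c(\mathbf{y}, \mathbf{y}_q))\bigr| + 2\, d(\mathbf{z}, \tilde{\mathbf{z}}).\]

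To reach the squared form, I would set $A = |d(\mathbf{z}, \mathbf{z}_q) - \delta(c(\mathbf{y}, \mathbf{y}_q))|$ and $B = 2\, d(\mathbf{z}, \tilde{\mathbf{z}})$, expand $(A+B)^2 = A^2 + 2AB + B^2$, and apply the weighted Young inequality $2AB \leq \lambda A^2 + B^2/\lambda$ with $\lambda = 4$. This gives $(A+B)^2 \leq 5A^2 + \tfrac{5}{4}B^2 = 5A^2 + 5\, d(\mathbf{z}, \tilde{\mathbf{z}})^2$, which is exactly the claimed bound once $A^2$ is rewritten as $(d(\mathbf{z}, \mathbf{z}_q) - \delta(c(\mathbf{y}, \mathbf{y}_q)))^2$. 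The only delicate point is the choice $\lambda = 4$: it is forced by the requirement that both error terms end up with the same coefficient $5$, since any other $\lambda$ either inflates the embedding term or the regression term. Beyond tuning this constant, the argument is a routine blend of the triangle inequality and the nearest-neighbor optimality of $\Psi$, so I do not expect any substantial obstacle.
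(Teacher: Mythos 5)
Your proof is correct and takes essentially the same route as the paper's: both hinge on the nearest-neighbor optimality of $\mathbf{z}_q$ together with $\mathbf{z} \in \mathcal{Q}$, giving $d(\mathbf{z}, \mathbf{z}_q) \leq 2\, d(\mathbf{z}, \tilde{\mathbf{z}})$, and then extract the constant $5$ by elementary algebra. The only difference is cosmetic: the paper lower-bounds the sum of the two error terms by completing the square, while you upper-bound $\delta(c(\mathbf{y}, \mathbf{y}_q))$ via the decomposition and Young's inequality with $\lambda = 4$, which is the same computation run in the opposite direction.
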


\begin{proof}
Since $\mathbf{z}_q$ is the nearest neighbor of $\tilde{\mathbf{z}}$,
we have $d(\mathbf{z}, \tilde{\mathbf{z}}) \geq \frac{1}{2} d(\mathbf{z}, \mathbf{z}_q)$.
Hence, 
\begin{align*}
\textit{embedding error} + \textit{regression error} &= (d(\mathbf{z}, \mathbf{z}_q) - \delta (c(\mathbf{y}, \mathbf{y}_q)))^2 + d(\mathbf{z}, \tilde{\mathbf{z}})^2 \\
&\geq (d(\mathbf{z}, \mathbf{z}_q) - \delta (c(\mathbf{y}, \mathbf{y}_q)))^2 + \frac{1}{4} d(\mathbf{z}, \mathbf{z}_q)^2 \\
&= \frac{5}{4} (d(\mathbf{z}, \mathbf{z}_q) - \frac{4}{5} \delta (c(\mathbf{y}, \mathbf{y}_q)))^2 + \frac{1}{5} \delta (c(\mathbf{y}, \mathbf{y}_q))^2 \\
&\geq \frac{1}{5} \delta (c(\mathbf{y}, \mathbf{y}_q))^2 .
\end{align*}
This implies the theorem.
\end{proof}

Theorem~\ref{thm:bound} implies that the cost of the prediction can be bounded by \textit{embedding error} and \textit{regression error}.
In our framework, the \textit{embedding error} can be reduced by multidimensional scaling and the \textit{regression error} can be reduced by learning a good regressor~$g$.
Theorem~\ref{thm:bound} provides a theoretical explanation of how our framework achieves cost-sensitivity.

In general, any monotonic function $\delta(\cdot)$ can be used in the proposed framework.
Based on Theorem~\ref{thm:bound}, we suggest $\delta (\cdot) = (\cdot)^{1/2}$ to directly bound the cost by $c(\mathbf{y}, \mathbf{y}_q) \leq 5 (\textit{embedding error} + \textit{regression error})$.
We will show that the suggested monotonic function leads to promising practical performance in Section~\ref{sec:exp}.

\section{Experiments}
\label{sec:exp}
We conduct the experiments on nine real-world datasets \citep{Tsoumakas2011mulan,Read2016meka} to validate the proposed algorithm, CLEMS.
The details of the datasets are shown by Table~\ref{tab:data}.
We evaluate the algorithms in our cost-sensitive setting with three commonly-used evaluation criteria, namely
\textit{F1 score}$(\mathbf{y}, \tilde{\mathbf{y}}) = \frac{2 \|\mathbf{y} \cap \tilde{\mathbf{y}} \|_1 }{\|\mathbf{y} \|_1 + \| \tilde{\mathbf{y}} \|_1}$,
\textit{Accuracy score}$(\mathbf{y}, \tilde{\mathbf{y}}) = \frac{\|\mathbf{y} \cap \tilde{\mathbf{y}} \|_1 }{\|\mathbf{y} \cup \tilde{\mathbf{y}} \|_1}$,
and \textit{Rank loss}$(\mathbf{y}, \tilde{\mathbf{y}}) = \sum\limits_{\mathbf{y}[i]>\mathbf{y}[j]} ( \llbracket \tilde{\mathbf{y}}[i] < \tilde{\mathbf{y}}[j] \rrbracket + \frac{1}{2}\llbracket \tilde{\mathbf{y}}[i] = \tilde{\mathbf{y}}[j] \rrbracket )$.
Note that F1 score and Accuracy score are symmetric while Rank loss is asymmetric.
For CLEMS, the input cost function is set as the corresponding evaluation criterion.

All the following experimental results are averaged over $20$ runs of experiments.
In each run, we randomly split $50\%$, $25\%$, and $25\%$ of the dataset for training, validation, and testing.
We use the validation part to select the best parameters for all the algorithms and report the corresponding testing results.
For all the algorithms, the internal predictors are set as random forest \citep{Breiman2001rfr} implemented by scikit-learn \citep{Pedregosa2011sklearn} and the maximum depth of the trees is selected from $\{ 5, 10, ..., 35 \}$.
For CLEMS, we use the implementation of scikit-learn for solving SMACOF algorithm to obtain the MDS-based embedding and the parameters of SMACOF algorithm are set as default values by scikit-learn.
For the other algorithms, the rest parameters are set as the default values suggested by their original papers.
In the following figures and tables, we use the notation $\uparrow (\downarrow)$ to highlight whether a higher (lower) value indicates better performance for the evaluation criterion.

\begin{table}[t]
	\caption{Properties of datasets}
	\label{tab:data}
	\scriptsize
	\centering
	\begin{tabular}{ccccc}
		\hline
		Dataset 	& \# of instance $N$ & \# of feature $d$ & \# of labels $K$  & \# of distinct labels \\
		\hline
		CAL500    	& 502 	& 68  	& 174  	& 502 	\\
		emotions 	& 593  	& 72  	& 6   	& 27  	\\
		birds    	& 645  	& 260 	& 19  	& 133 	\\
		medical  	& 978   	& 1449 	& 45  	& 94  	\\
		enron     	& 1702 	& 1001 	& 53  	& 753 	\\
		scene    	& 2407 	& 294 	& 6   	& 15  	\\
		yeast    	& 2417 	& 103 	& 14  	& 198 	\\
		slashdot 	& 3279  	& 1079 	& 22 	& 156	\\
		EUR-Lex(dc)	& 19348 	& 5000 	& 412 	& 1615	\\
		\hline
	\end{tabular}
\end{table}

\subsection{Comparing CLEMS with LSDR algorithms}
In the first experiment, we compare CLEMS with four LSDR algorithms introduced in Section~\ref{sec:csle}:
principal label space transformation (PLST) \citep{Tai2012plst}, 
conditional principal label space transformation (CPLST) \citep{Chen2012cplst},
feature-aware implicit label space encoding (FaIE) \citep{Lin2014faie},
and sparse local embeddings for extreme classification (SLEEC) \citep{Bhatia2015sleec}

Since the prediction of SLEEC is a real-value vector rather than binary,
we choose the best threshold for quantizing the vector according to the given criterion during training.
Thus, our modified SLEEC can be viewed as ``semi-cost-sensitive'' algorithm that learns the threshold according to the criterion.

\begin{figure}[!t]
	\centering
	\includegraphics[width=.95\textwidth]{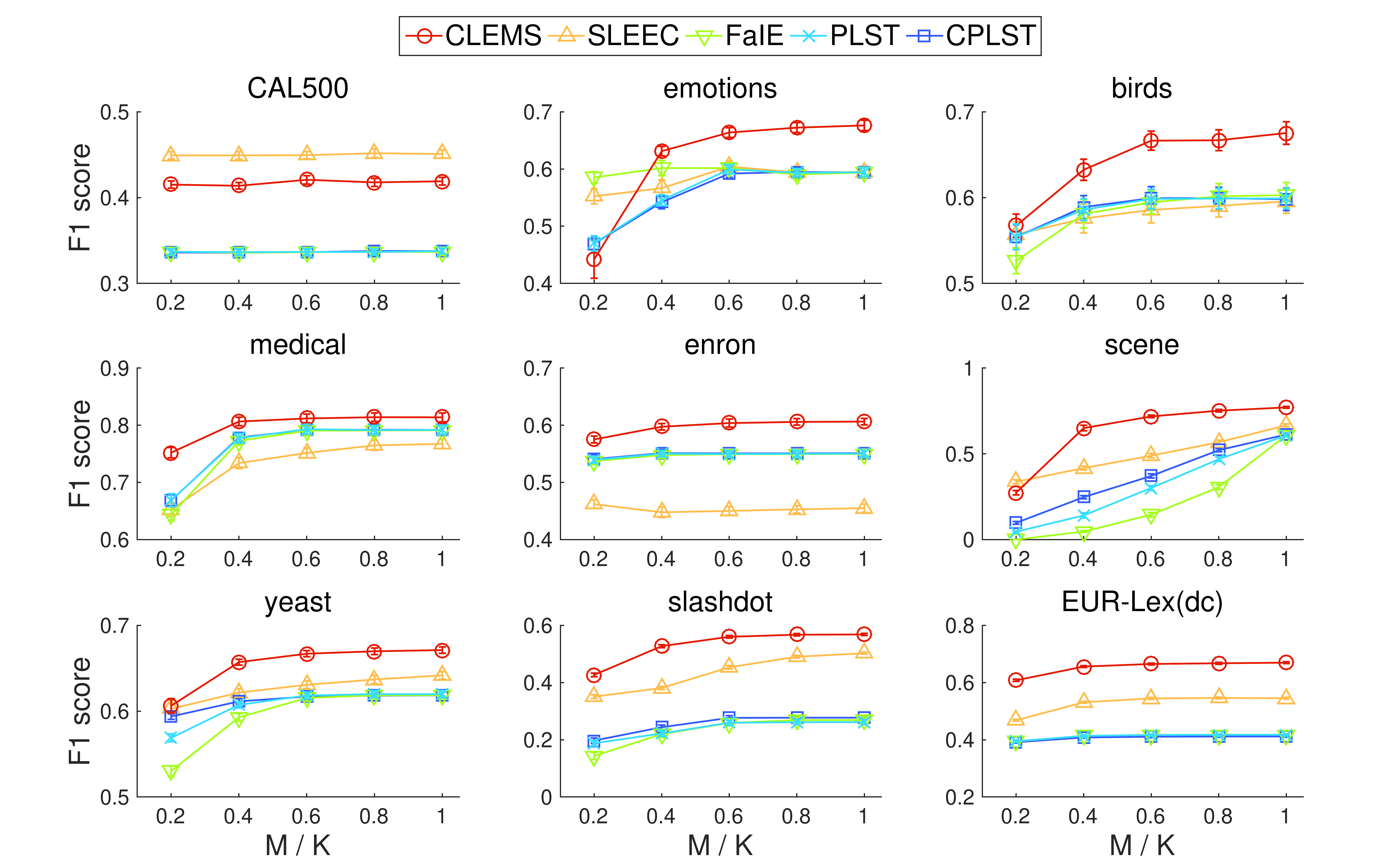}
	\caption{F1 score ($\uparrow$) with the 95\% confidence interval of CLEMS and LSDR algorithms}
	\label{fig:lsdr_f1}
\end{figure}

\begin{figure}[!t]
	\centering
	\includegraphics[width=.95\textwidth]{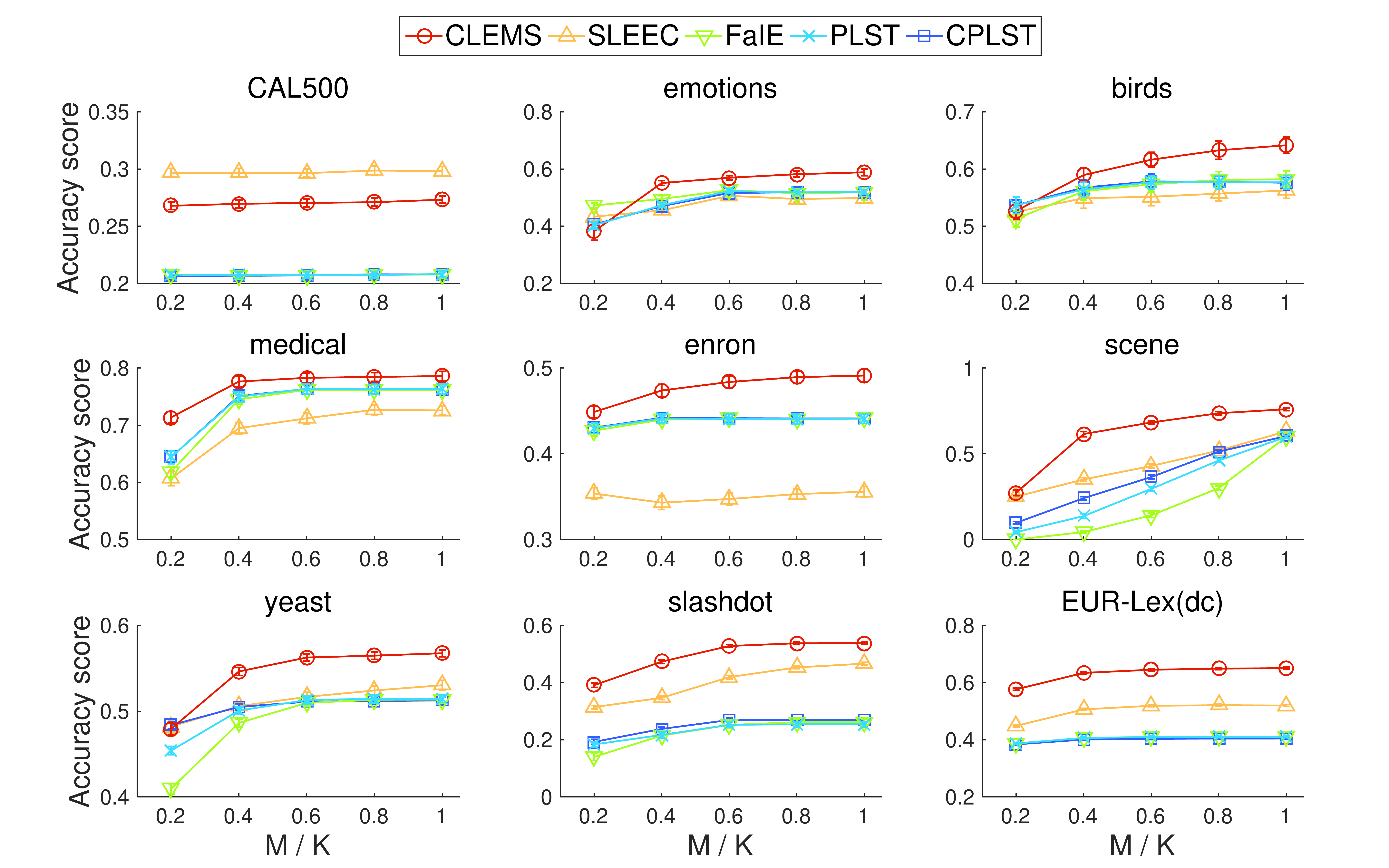}
	\caption{Accuracy score ($\uparrow$) with the 95\% confidence interval of CLEMS and LSDR algorithms}
	\label{fig:lsdr_acc}
\end{figure}

\begin{figure}[!t]
	\centering
	\includegraphics[width=.95\textwidth]{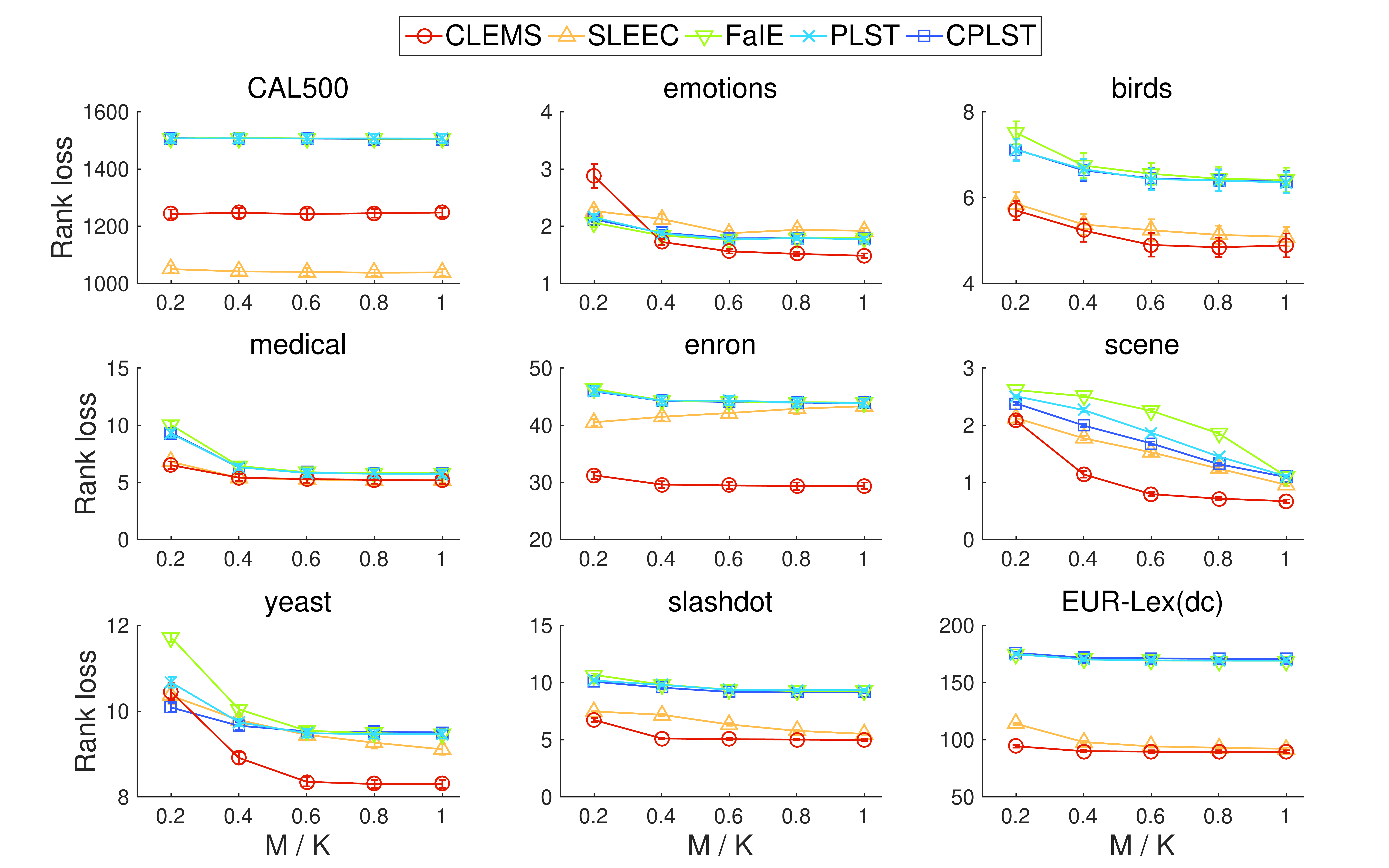}
	\caption{Rank loss ($\downarrow$) with the 95\% confidence interval of CLEMS and LSDR algorithms}
	\label{fig:lsdr_rank}
\end{figure}

Figures~\ref{fig:lsdr_f1} and Figure~\ref{fig:lsdr_acc} show the results of F1 score and Accuracy score across different embedded dimensions~$M$.
As~$M$ increases, all the algorithms reach better performance because of the better preservation of label information.
CLEMS outperforms the non-cost-sensitive algorithms (PLST, CPLST, and FaIE) in most of the cases, which verifies the importance of constructing a cost-sensitive embedding. CLEMS also exhibits considerably better performance over SLEEC in most of the datasets, which demonstrates the usefulness to consider the cost information during embedding (CLEMS) rather than after the embedding (SLEEC).
The results of Rank loss are shown by Figure~\ref{fig:lsdr_rank}.
CLEMS again reaches the best in most of the cases, which justifies its validity for asymmetric criteria through the mirroring trick.

\subsection{Comparing CLEMS with LSDE algorithms}
We compare CLEMS with ECC-based LSDE algorithms \citep{Ferng2013ecc}.
We consider two promising error-correcting codes, \textit{repetition code} (ECC-RREP) and \textit{Hamming on repetition code} (ECC-HAMR) in the original work.
The former is equivalent to the famous Random $k$-labelsets~(RA$k$EL) algorithm \citep{Tsoumakas2011rakel}.

\begin{figure}[!t]
	\centering
	\includegraphics[width=.95\textwidth]{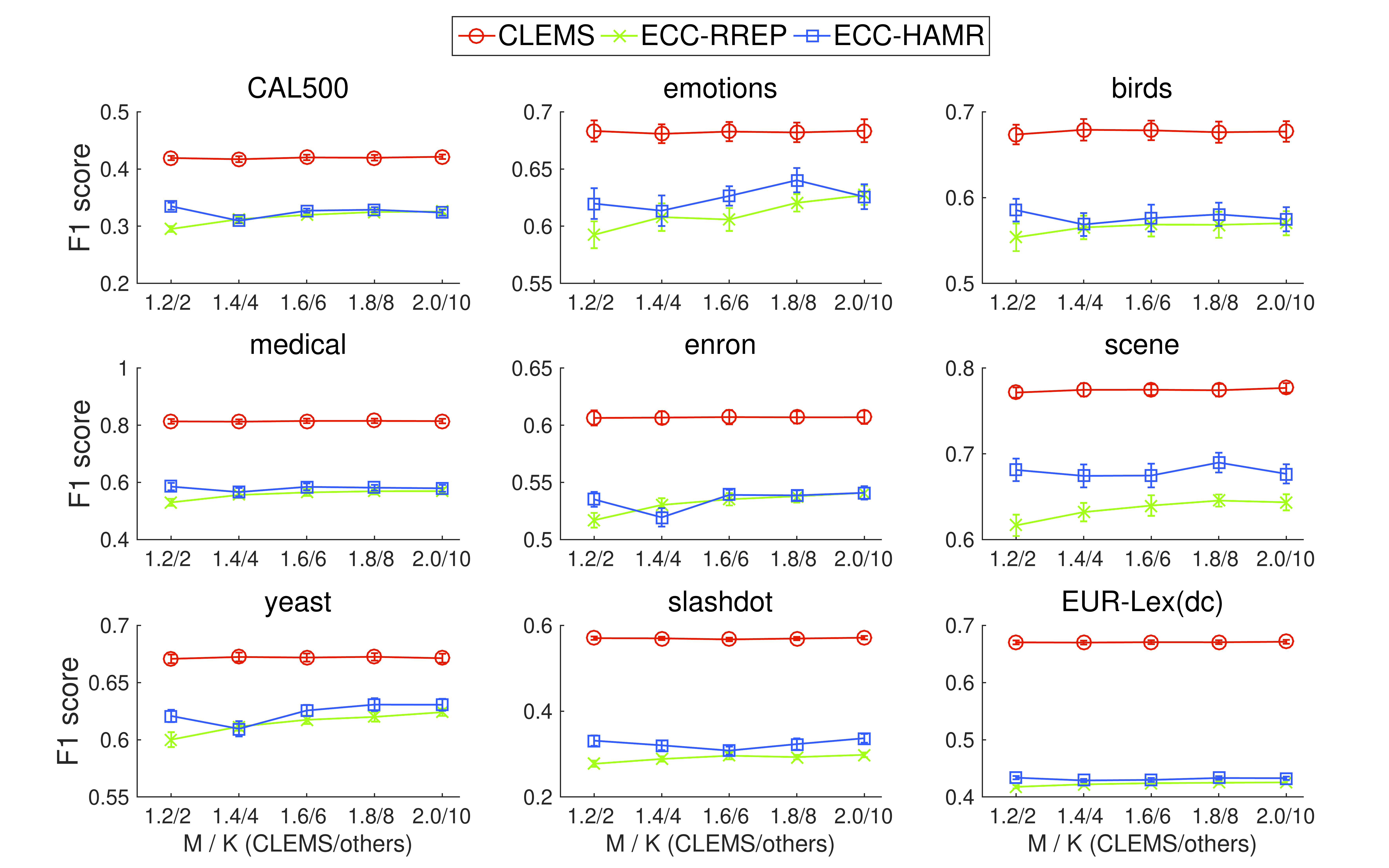}
	\caption{F1 score ($\uparrow$) with the 95\% confidence interval of CLEMS and LSDE algorithms}
	\label{fig:lsde_f1}
\end{figure}

\begin{figure}[!t]
	\centering
	\includegraphics[width=.95\textwidth]{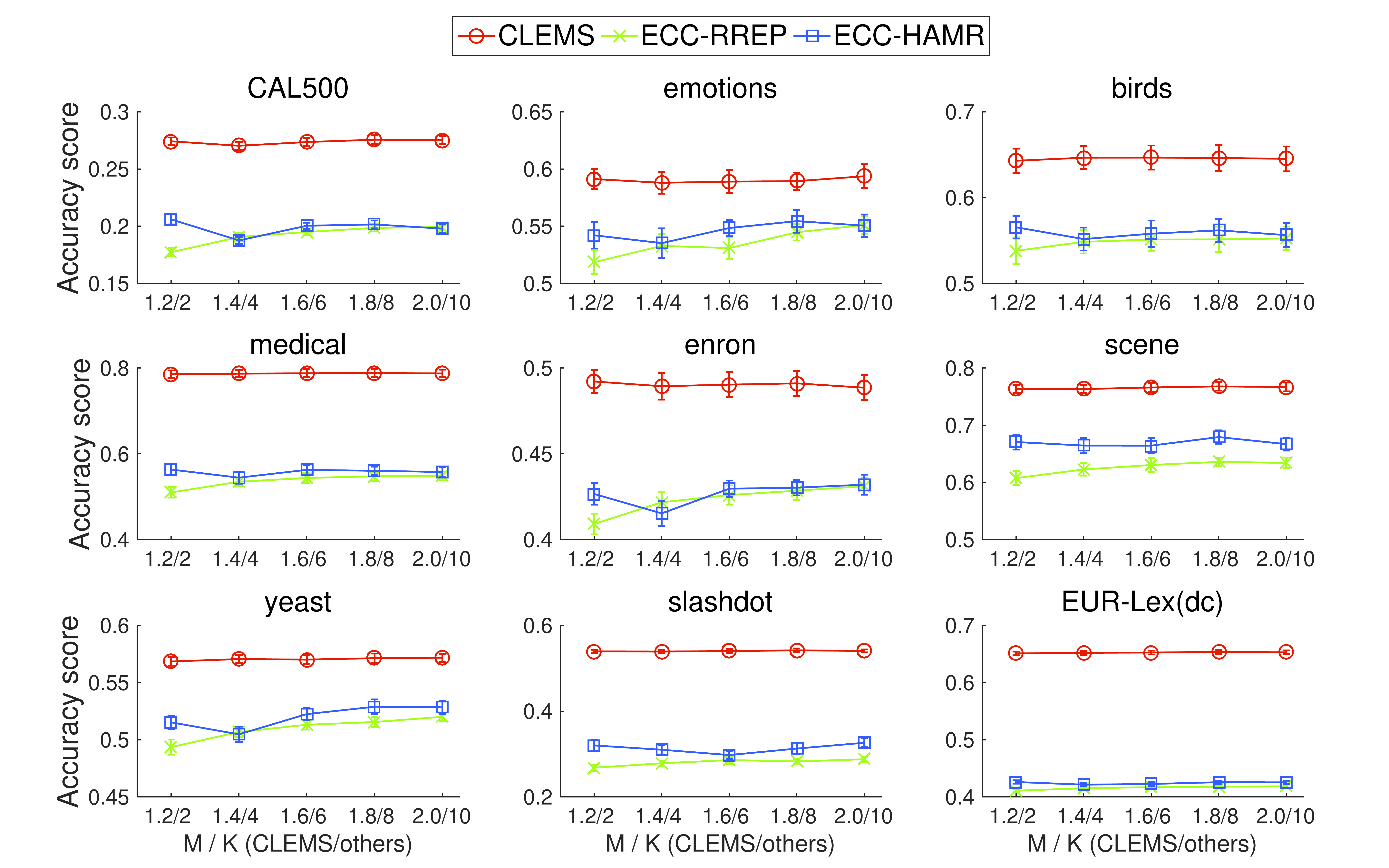}
	\caption{Accuracy score ($\uparrow$) with the 95\% confidence interval of CLEMS and LSDE algorithms}
	\label{fig:lsde_acc}
\end{figure}

\begin{figure}[!t]
	\centering
	\includegraphics[width=.95\textwidth]{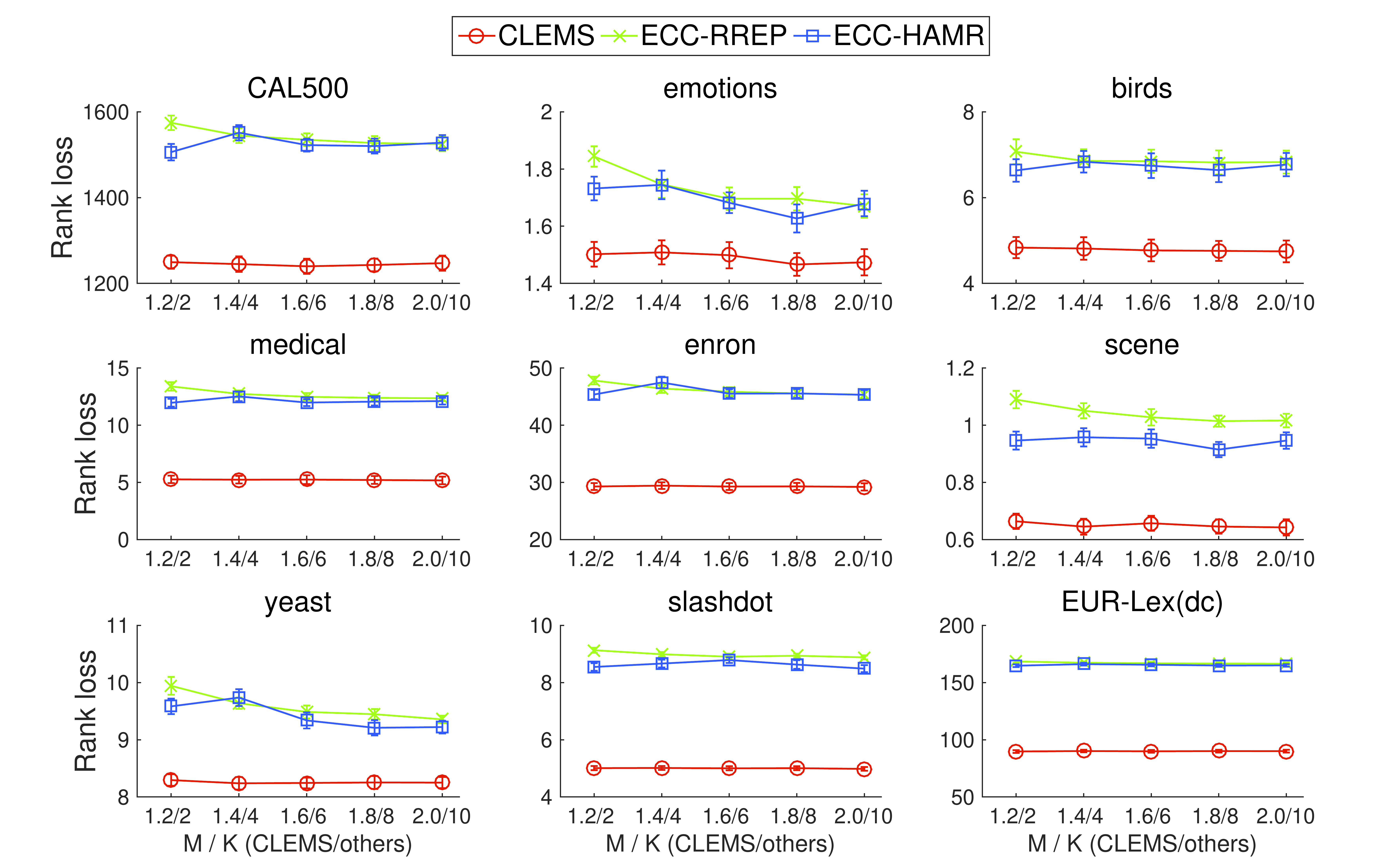}
	\caption{Rank loss ($\downarrow$) with the 95\% confidence interval of CLEMS and LSDE algorithms}
	\label{fig:lsde_rank}
\end{figure}

Figure~\ref{fig:lsde_f1} shows the results of F1 score.
Note that in the figure, the scales of $M / K$ for CLEMS and other LSDE algorithms are different.
The scale of CLEMS is $\{1.2, 1.4, 1.6, 1.8, 2.0\}$ while the scale of other LSDE algorithms is $\{2, 4, 6, 8, 10 \}$.
Although we give LSDE algorithms more dimensions to embed the label information, CLEMS is still superior to those LSDE algorithms in most of cases.
Similar results happen for Accuracy score and the Rank loss (Figure~\ref{fig:lsde_acc} and Figure~\ref{fig:lsde_rank}).
The results again justify the superiority of CLEMS.

\subsection{Candidate set and embedded dimension}
Now, we discuss the influence of the candidate set $\mathcal{S}$.
In Section~\ref{sec:clems}, we proposed to embed $\mathcal{S}_{tr}$ instead of $\mathcal{Y}$.
To verify the goodness of the choice, we compare CLEMS with different candidate sets.
We consider the sets sub-sampled with different percentage from $\mathcal{S}_{tr}$ to evaluate the importance of label vectors in $\mathcal{S}_{tr}$.
Furthermore, to know whether or not larger candidate set leads to better performance, we also randomly sample different percentage of additional label vectors from $\mathcal{Y} \setminus \mathcal{S}_{tr}$ and merge them with $\mathcal{S}_{tr}$ as the candidate sets.
The results of three largest datasets are shown by Figures~\ref{fig:ct_f1}, \ref{fig:ct_acc}, and~\ref{fig:ct_rank}.
From the figures, we observe that sub-sampling from $\mathcal{S}_{tr}$ generally lead to worse performance; adding more candidates from $\mathcal{Y} \setminus \mathcal{S}_{tr}$, on the other hand, does not lead to significantly-better performance.
The two findings suggest that using $\mathcal{S}_{tr}$ as the candidate set is necessary and sufficient for decent performance.

\begin{figure}[!t]
	\centering
	\includegraphics[width=.95\textwidth]{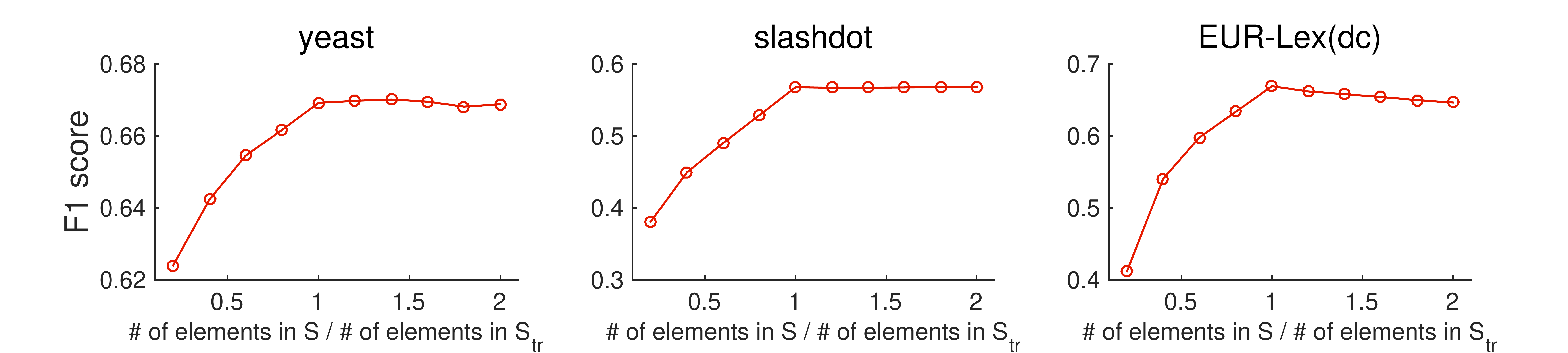}
	\caption{F1 score ($\uparrow$) of CLEMS with different size of candidate sets}
	\label{fig:ct_f1}
\end{figure}

\begin{figure}[!t]
	\centering
	\includegraphics[width=.95\textwidth]{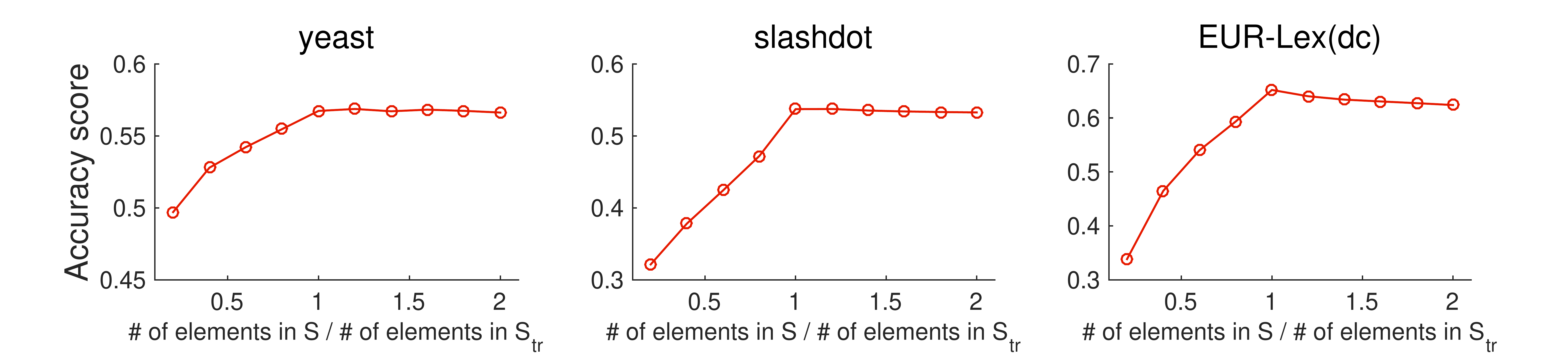}
	\caption{Accuracy score ($\uparrow$) of CLEMS with different size of candidate sets}
	\label{fig:ct_acc}
\end{figure}

\begin{figure}[!t]
	\centering
	\includegraphics[width=.95\textwidth]{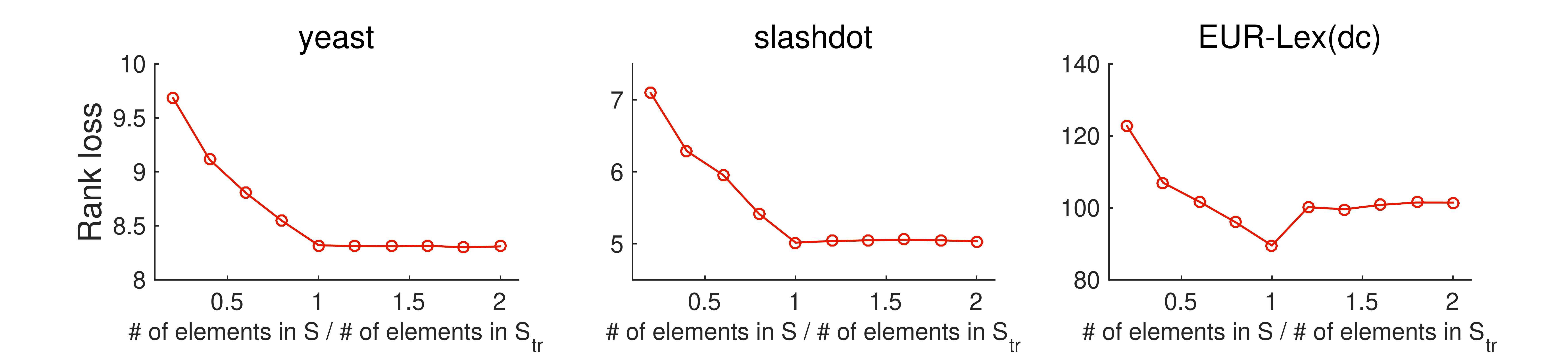}
	\caption{Rank loss ($\downarrow$) of CLEMS with different size of candidate sets}
	\label{fig:ct_rank}
\end{figure}

We conduct another experiment about the candidate set.
Instead of random sampling, we consider $\mathcal{S}_{all}$, which denotes the set of label vectors that appear in the training instances and the testing instances,
to estimate the benefit of ``peeping'' the testing label vectors and embedding them in advance.
We show the results of CLEMS with $\mathcal{S}_{tr}$ (CLEMS-train) and $\mathcal{S}_{all}$ (CLEMS-all) versus different embedded dimensions by Figure~\ref{fig:all_f1}, \ref{fig:all_acc}, and~\ref{fig:all_rank}.
From the figures, we see that the improvement of CLEMS-all over CLEMS-train is small and insignificant.
The results imply again that $\mathcal{S}_{tr}$ readily allows nearest-neighbor decoding to make sufficiently good choices.

\begin{figure}[!t]
	\centering
	\includegraphics[width=.95\textwidth]{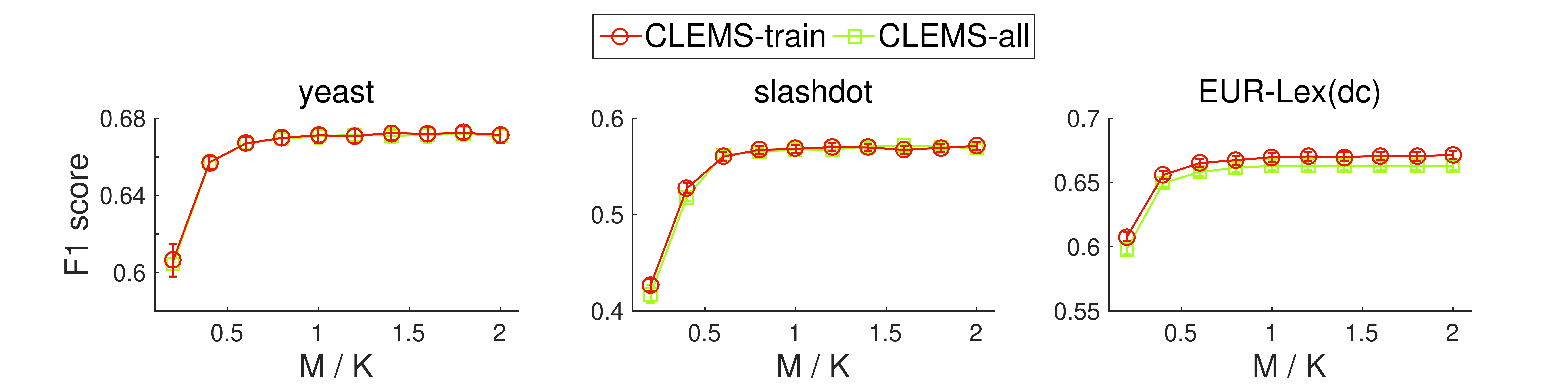}
	\caption{F1 score ($\uparrow$) with the 95\% confidence interval of CLEMS-train and CLEMS-all}
	\label{fig:all_f1}
\end{figure}

\begin{figure}[!t]
	\centering
	\includegraphics[width=.95\textwidth]{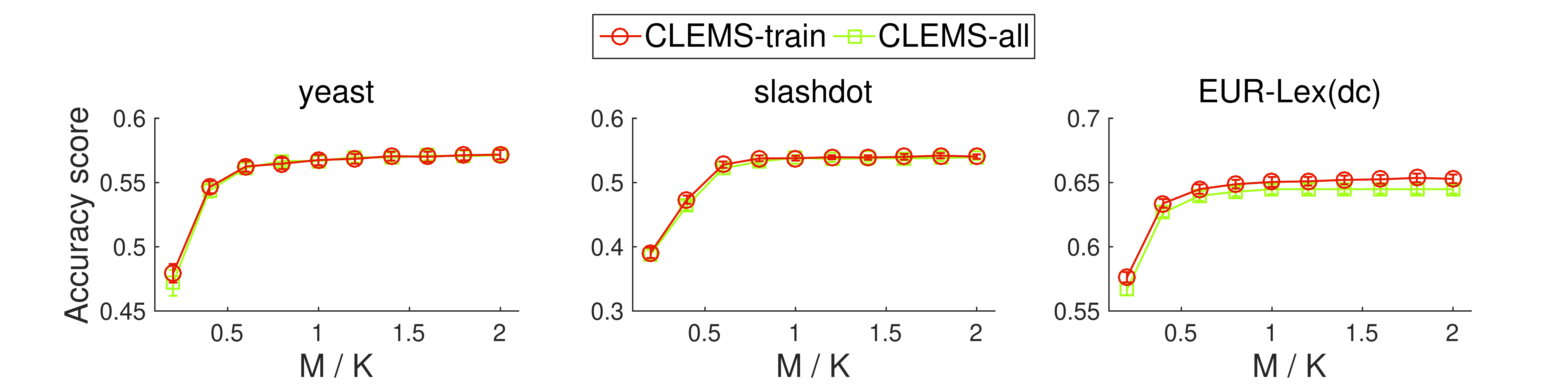}
	\caption{Accuracy score ($\uparrow$) with the 95\% confidence interval of CLEMS-train and CLEMS-all}
	\label{fig:all_acc}
\end{figure}

\begin{figure}[!t]
	\centering
	\includegraphics[width=.95\textwidth]{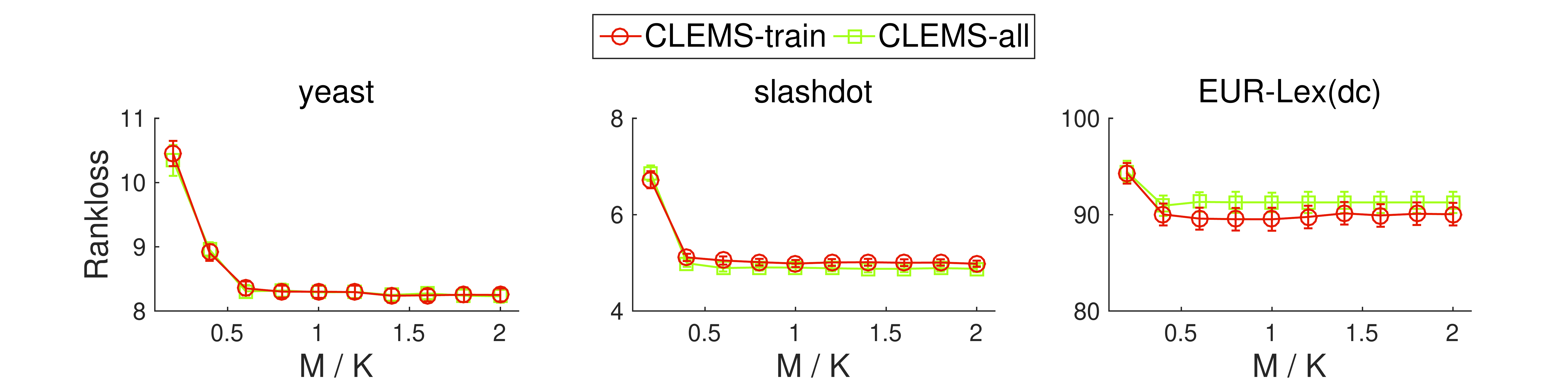}
	\caption{Rank loss ($\downarrow$) with the 95\% confidence interval of CLEMS-train and CLEMS-all}
	\label{fig:all_rank}
\end{figure}

Now, we discuss about the embedded dimension $M$.
From Figure~\ref{fig:all_f1}, \ref{fig:all_acc}, and~\ref{fig:all_rank}, CLEMS reaches better performance as $M$ increases.
For LSDR, $M$ plays an important role since it decides how much information can be preserved in the embedded space.
Nevertheless, For LSDE, the improvement becomes marginal when~$M$ increases.
The results suggest that for LSDE, the influence of the additional dimension is not large, and setting the embedded dimension $M = K$ is sufficiently good in practice. One possible reason for the sufficiency is that the criteria of interest
are generally not complicated enough and thus do not need more dimensions to preserve the cost information.

\subsection{Comparing CLEMS with cost-sensitive algorithms}
In this section, we compare CLEMS with two state-of-the-art cost-sensitive algorithms, probabilistic classifier chain (PCC) \citep{Dembczynski2010pcc,Dembczynski2011pcc2} and condensed filter tree (CFT) \citep{Li2014cft}.
Both CLEMS and CFT can handle arbitrary criteria while PCC can handle only those criteria with efficient inference rules.
In addition, we also report the results of some baseline algorithms, such as binary relevance~(BR) \citep{Tsoumakas2007br} and classifier chain~(CC) \citep{Read2011cc}.
Similar to previous experiments, the internal predictors of all algorithms are set as random forest \citep{Breiman2001rfr} implemented by scikit-learn \citep{Pedregosa2011sklearn} with the same parameter selection process.


\begin{figure}[!t]
	\centering
	\subfigure[average training time]{\includegraphics[width=.8\textwidth]{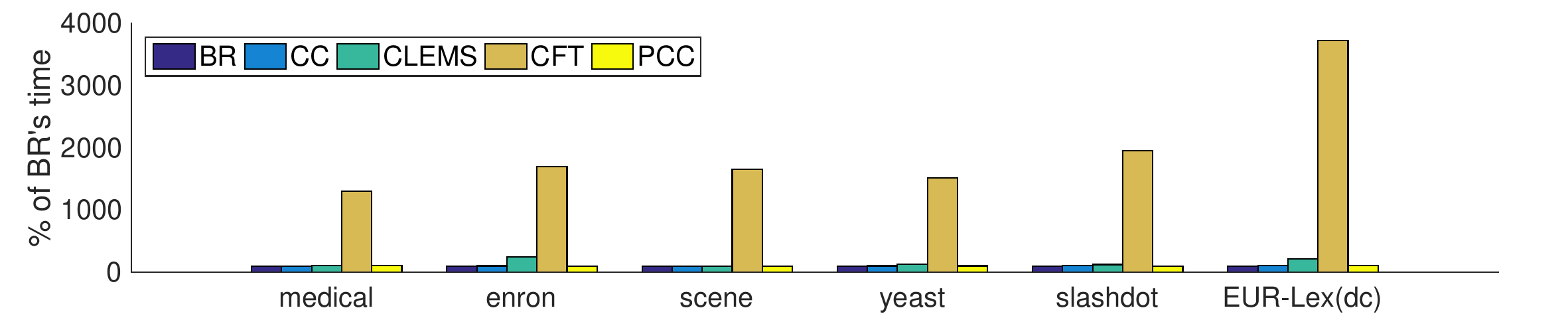}}
	\subfigure[average predicting time]{\includegraphics[width=.8\textwidth]{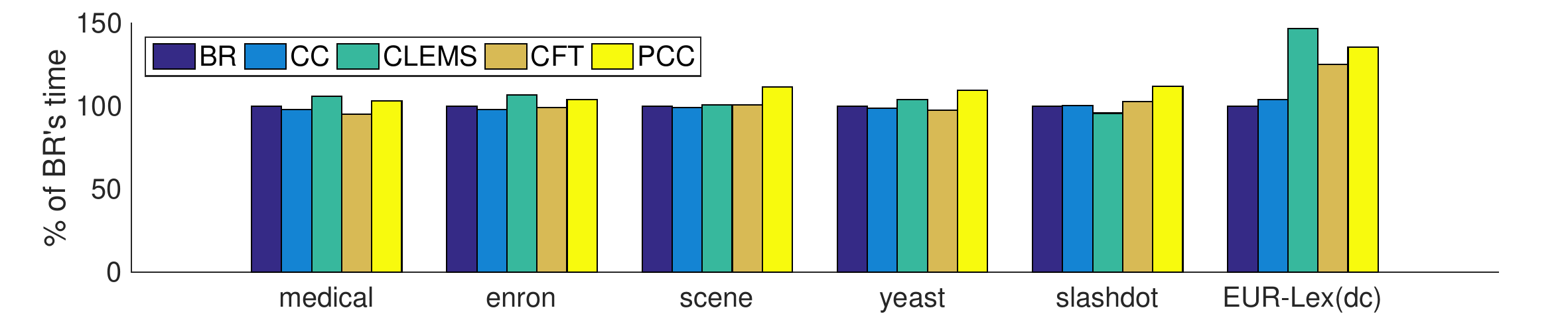}}
	\subfigure[average total running time]{\includegraphics[width=.8\textwidth]{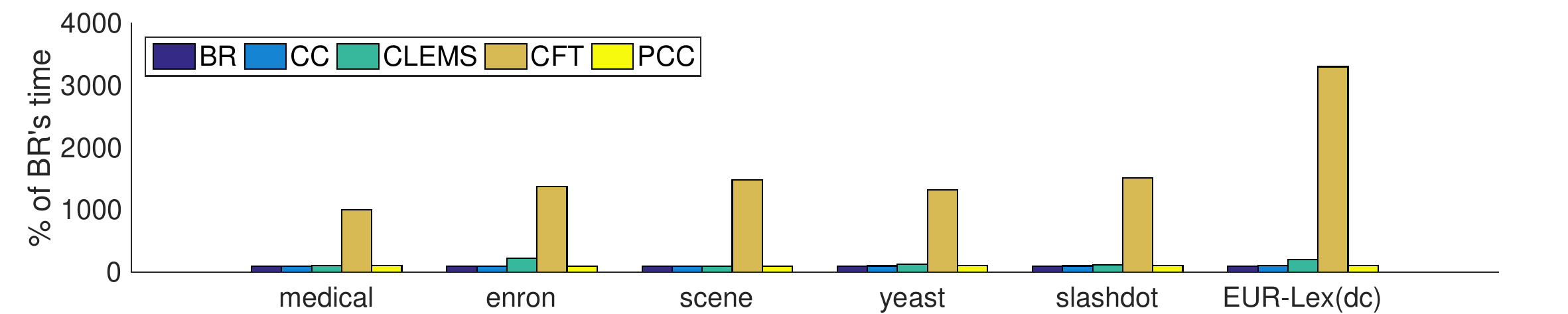}}
	\caption{Average running time when taking F1 score as cost function}
	\label{fig:time}
\end{figure}

\paragraph{Running Time.}
Figure~\ref{fig:time} illustrates the average training, predicting, and total running time when taking F1 score as the intended criterion for the six largest datasets. The running time is normalized by the running time of BR.
For training time, CFT is the slowest, because it needs
to iteratively estimate the importance of each label and re-train internal predictors. CLEMS, which consumes time for MDS calculations, is intuitively slower than baseline algorithms and PCC during training, but still much faster than CFT.
For prediction time, all algorithms, including PCC (using inference calculation) and CLEMS (using nearest-neighbor calculation) are similarly fast. The results suggest that for CSMLC, CLEMS is superior to CFT and competitive to PCC for the overall efficiency.



\paragraph{Performance.}
We compare the performance of CLEMS and other algorithms across different criteria.
To demonstrate the full ability of CLEMS, in addition to F1 score, Accuracy score, and Rank loss, we further consider one additional criterion, \textit{Composition loss} = 1$+$5$\times$\textit{Hamming loss}$-$\textit{F1 score}, as used by \citet{Li2014cft}.
We also consider three more datasets (arts, flags, and language-log) that comes from other MLC works \citep{Tsoumakas2011mulan,Read2016meka}. 

\begin{table}[!t]
	\centering
	\caption{Performance across different criteria (mean $\pm$ ste (rank)) (best marked in bold)}
	\label{tab:cost}
	\tiny
	\begin{tabular}{cccccc}
		\hline
		Dataset & Alg. & F1 score ($\uparrow$) & Acc. score ($\uparrow$) & Rank loss ($\downarrow$) & Compo. loss ($\downarrow$) \\
		\hline
		\multirow{5}{*}{flags}  & BR    & $0.703 \pm 0.006\,(5)$	& $0.591 \pm 0.007\,(3)$	& $3.011 \pm 0.056\,(4)$	& $1.583 \pm 0.028\,(3)$	\\
		                        & CC    & $0.704 \pm 0.006\,(4)$	& $0.594 \pm 0.008\,(2)$	& $2.998 \pm 0.061\,(3)$	& $1.580 \pm 0.028\,(2)$	\\
		                        & CLEMS & $\bf0.731 \pm 0.005\,(1)$	& $\bf0.615 \pm 0.008\,(1)$	& $2.930 \pm 0.061\,(2)$	& $\bf1.575 \pm 0.026\,(1)$	\\
		                        & CFT   & $0.692 \pm 0.008\,(3)$	& $0.588 \pm 0.009\,(4)$	& $3.075 \pm 0.060\,(5)$	& $1.640 \pm 0.033\,(4)$ 	\\
		                        & PCC   & $0.706 \pm 0.006\,(2)$	& --						& $\bf2.857 \pm 0.051\,(1)$	& -- 						\\
		\hline
		\multirow{5}{*}{CAL.}   & BR    & $0.338 \pm 0.002\,(4)$	& $0.208 \pm 0.001\,(3)$	& $1504.8 \pm 7.98\,(4)$	& $\bf1.366 \pm 0.005\,(1)$	\\
		                        & CC    & $0.328 \pm 0.002\,(5)$	& $0.202 \pm 0.002\,(4)$	& $1520.9 \pm 9.04\,(5)$	& $1.371 \pm 0.006\,(2)$	\\
		                        & CLEMS & $\bf0.419 \pm 0.002\,(1)$	& $\bf0.273 \pm 0.002\,(1)$	& $1247.9 \pm 8.21\,(3)$	& $1.426 \pm 0.004\,(4)$ 	\\
		                        & CFT   & $0.371 \pm 0.003\,(3)$	& $0.237 \pm 0.002\,(2)$ 	& $1120.8 \pm 8.46\,(2)$ 	& $1.378 \pm 0.006\,(3)$	\\
		                        & PCC   & $0.391 \pm 0.002\,(2)$	& --						& $\bf993.6 \pm 4.75\,(1)$	& -- 						\\
		\hline
		\multirow{5}{*}{birds}  & BR    & $0.569 \pm 0.007\,(5)$	& $0.551 \pm 0.007\,(4)$	& $6.845 \pm 0.139\,(5)$	& $0.656 \pm 0.011\,(4)$	\\
		                        & CC    & $0.570 \pm 0.007\,(4)$	& $0.552 \pm 0.007\,(3)$	& $6.825 \pm 0.138\,(4)$	& $0.654 \pm 0.011\,(3)$	\\
		                        & CLEMS & $\bf0.677 \pm 0.006\,(1)$	& $\bf0.642 \pm 0.007\,(1)$	& $4.886 \pm 0.142\,(2)$	& $\bf0.563 \pm 0.012\,(1)$	\\
		                        & CFT   & $0.601 \pm 0.007\,(3)$	& $0.586 \pm 0.007\,(2)$	& $4.908 \pm 0.148\,(3)$	& $0.607 \pm 0.012\,(2)$ 	\\
		                        & PCC   & $0.636 \pm 0.007\,(2)$	& --						& $\bf3.660 \pm 0.103\,(1)$	& -- 						\\
		\hline
		\multirow{5}{*}{emot.}	& BR    & $0.596 \pm 0.005\,(5)$	& $0.523 \pm 0.004\,(4)$	& $1.764 \pm 0.022\,(5)$	& $1.352 \pm 0.012\,(4)$	\\
								& CC    & $0.615 \pm 0.005\,(4)$	& $0.539 \pm 0.004\,(3)$	& $1.715 \pm 0.021\,(4)$	& $1.329 \pm 0.013\,(3)$ 	\\
								& CLEMS & $\bf0.676 \pm 0.005\,(1)$	& $\bf0.589 \pm 0.006\,(1)$	& $1.484 \pm 0.020\,(2)$	& $\bf1.271 \pm 0.013\,(1)$	\\
		                        & CFT   & $0.640 \pm 0.004\,(3)$	& $0.557 \pm 0.004\,(2)$	& $1.563 \pm 0.018\,(3)$	& $1.324 \pm 0.016\,(2)$ 	\\
		                        & PCC   & $0.643 \pm 0.005\,(2)$	& --						& $\bf1.467 \pm 0.018\,(1)$	& -- 						\\
		\hline
		\multirow{5}{*}{medic.} & BR    & $0.517 \pm 0.006\,(5)$	& $0.496 \pm 0.006\,(4)$	& $13.784 \pm 0.175\,(5)$	& $0.562 \pm 0.006\,(4)$	\\
		                        & CC    & $0.533 \pm 0.006\,(4)$	& $0.512 \pm 0.006\,(3)$	& $13.328 \pm 0.167\,(4)$	& $0.544 \pm 0.007\,(3)$	\\
		                        & CLEMS & $\bf0.814 \pm 0.004\,(1)$	& $\bf0.786 \pm 0.004\,(1)$	& $5.170 \pm 0.159\,(2)$	& $\bf0.289 \pm 0.005\,(1)$	\\
		                        & CFT   & $0.635 \pm 0.005\,(2)$	& $0.613 \pm 0.005\,(2)$	& $5.811 \pm 0.131\,(3)$	& $0.438 \pm 0.007\,(2)$ 	\\
		                        & PCC   & $0.573 \pm 0.006\,(3)$	& --						& $\bf4.234 \pm 0.109\,(1)$	& -- 						\\
		\hline
		\multirow{5}{*}{lang.}  & BR    & $0.160 \pm 0.004\,(5)$	& $0.159 \pm 0.004\,(4)$	& $42.46 \pm 0.271\,(5)$	& $0.919 \pm 0.004\,(4)$	\\
		                        & CC    & $0.161 \pm 0.004\,(4)$	& $0.160 \pm 0.004\,(3)$	& $42.42 \pm 0.168\,(4)$	& $0.918 \pm 0.004\,(3)$	\\
		                        & CLEMS & $\bf0.375 \pm 0.005\,(1)$	& $\bf0.327 \pm 0.005\,(1)$	& $31.03 \pm 0.383\,(2)$	& $\bf0.734 \pm 0.007\,(1)$	\\
		                        & CFT   & $0.168 \pm 0.004\,(3)$	& $0.164 \pm 0.004\,(2)$	& $34.16 \pm 0.285\,(3)$	& $0.910 \pm 0.005\,(2)$ 	\\
		                        & PCC   & $0.247 \pm 0.004\,(2)$	& --						& $\bf19.11 \pm 0.211\,(1)$	& -- 						\\
		\hline
		\multirow{5}{*}{enron}  & BR    & $0.543 \pm 0.003\,(4)$	& $0.433 \pm 0.003\,(4)$	& $44.83 \pm 0.376\,(5)$	& $0.688 \pm 0.004\,(4)$	\\
		                        & CC    & $0.553 \pm 0.003\,(3)$	& $0.443 \pm 0.003\,(3)$	& $43.82 \pm 0.429\,(4)$	& $0.678 \pm 0.005\,(3)$	\\
		                        & CLEMS & $\bf0.606 \pm 0.003\,(1)$	& $\bf0.491 \pm 0.004\,(1)$	& $29.40 \pm 0.300\,(3)$	& $\bf0.659 \pm 0.005\,(1)$	\\
		                        & CFT   & $0.557 \pm 0.004\,(2)$	& $0.448 \pm 0.003\,(2)$	& $26.64 \pm 0.311\,(2)$	& $0.677 \pm 0.005\,(2)$ 	\\
		                        & PCC   & $0.542 \pm 0.003\,(5)$	& --						& $\bf25.11 \pm 0.263\,(1)$	& --						\\
		\hline
		\multirow{5}{*}{scene}  & BR    & $0.577 \pm 0.003\,(5)$	& $0.568 \pm 0.004\,(4)$	& $1.169 \pm 0.010\,(5)$	& $0.866 \pm 0.007\,(4)$	\\
		                        & CC    & $0.598 \pm 0.004\,(4)$	& $0.590 \pm 0.004\,(3)$	& $1.122 \pm 0.012\,(4)$	& $0.833 \pm 0.009\,(3)$	\\
		                        & CLEMS & $\bf0.770 \pm 0.003\,(1)$	& $\bf0.760 \pm 0.004\,(1)$	& $0.672 \pm 0.015\,(2)$	& $\bf0.578 \pm 0.009\,(1)$	\\
		                        & CFT   & $0.703 \pm 0.004\,(3)$	& $0.656 \pm 0.004\,(2)$	& $0.723 \pm 0.011\,(3)$	& $0.776 \pm 0.009\,(2)$ 	\\
		                        & PCC   & $0.745 \pm 0.003\,(2)$	& --						& $\bf0.645 \pm 0.005\,(1)$	& -- 						\\
		\hline
		\multirow{5}{*}{yeast}  & BR    & $0.611 \pm 0.002\,(5)$	& $0.503 \pm 0.002\,(4)$	& $9.673 \pm 0.048\,(5)$	& $1.345 \pm 0.006\,(3)$	\\
		                        & CC    & $0.612 \pm 0.003\,(4)$	& $0.512 \pm 0.003\,(3)$	& $9.530 \pm 0.067\,(4)$	& $1.352 \pm 0.009\,(4)$	\\
		                        & CLEMS & $\bf0.671 \pm 0.002\,(1)$	& $\bf0.568 \pm 0.002\,(1)$	& $\bf8.302 \pm 0.049\,(1)$	& $\bf1.308 \pm 0.006\,(1)$	\\
		                        & CFT   & $0.649 \pm 0.002\,(2)$	& $0.543 \pm 0.002\,(2)$	& $8.566 \pm 0.052\,(3)$	& $1.335 \pm 0.007\,(2)$ 	\\
		                        & PCC   & $0.614 \pm 0.002\,(3)$	& --						& $8.469 \pm 0.057\,(2)$	& -- 						\\
		\hline
		\multirow{5}{*}{slash.} & BR    & $0.215 \pm 0.002\,(5)$	& $0.208 \pm 0.002\,(4)$	& $9.819 \pm 0.030\,(5)$	& $1.007 \pm 0.003\,(4)$	\\
		                        & CC    & $0.230 \pm 0.002\,(4)$	& $0.222 \pm 0.002\,(3)$	& $9.662 \pm 0.027\,(4)$	& $0.990 \pm 0.003\,(3)$	\\
		                        & CLEMS & $\bf0.568 \pm 0.002\,(1)$	& $\bf0.538 \pm 0.002\,(1)$	& $4.986 \pm 0.038\,(2)$	& $\bf0.668 \pm 0.003\,(1)$	\\
		                        & CFT   & $0.429 \pm 0.003\,(3)$	& $0.402 \pm 0.003\,(2)$	& $5.677 \pm 0.033\,(3)$	& $0.798 \pm 0.003\,(2)$ 	\\
		                        & PCC   & $0.503 \pm 0.003\,(2)$	& --						& $\bf4.472 \pm 0.029\,(1)$	& -- 						\\
		\hline
		\multirow{5}{*}{arts}   & BR    & $0.167 \pm 0.002\,(5)$	& $0.156 \pm 0.002\,(4)$	& $17.221 \pm 0.064\,(5)$	& $1.117 \pm 0.003\,(4)$	\\
		                        & CC    & $0.170 \pm 0.002\,(4)$	& $0.160 \pm 0.002\,(3)$	& $17.173 \pm 0.064\,(4)$	& $1.113 \pm 0.003\,(3)$	\\
		                        & CLEMS & $\bf0.492 \pm 0.002\,(1)$	& $\bf0.451 \pm 0.003\,(1)$	& $9.865 \pm 0.079\,(2)$	& $\bf0.815 \pm 0.006\,(1)$	\\
		                        & CFT   & $0.334 \pm 0.002\,(3)$	& $0.281 \pm 0.002\,(2)$	& $10.071 \pm 0.060\,(3)$	& $1.001 \pm 0.003\,(2)$ 	\\
		                        & PCC   & $0.349 \pm 0.002\,(2)$	& --						& $\bf8.467 \pm 0.047\,(1)$	& -- 						\\
		\hline
		\multirow{5}{*}{EUR.}   & BR    & $0.417 \pm 0.002\,(4)$	& $0.411 \pm 0.001\,(3)$	& $168.38 \pm 0.63\,(4)$	& $0.593 \pm 0.002\,(3)$	\\
		                        & CC    & $0.416 \pm 0.002\,(5)$	& $0.410 \pm 0.001\,(4)$	& $168.57 \pm 0.61\,(5)$	& $0.594 \pm 0.002\,(4)$	\\
		                        & CLEMS & $\bf0.670 \pm 0.002\,(1)$	& $\bf0.650 \pm 0.002\,(1)$	& $89.52 \pm 0.61\,(2)$		& $\bf0.344 \pm 0.002\,(1)$	\\
		                        & CFT   & $0.456 \pm 0.002\,(3)$	& $0.450 \pm 0.002\,(2)$	& $129.53 \pm 0.75\,(3)$	& $0.552 \pm 0.002\,(2)$ 	\\
		                        & PCC   & $0.483 \pm 0.002\,(2)$	& --						& $\bf43.28 \pm 0.22\,(1)$	& -- 						\\
		\hline
	\end{tabular}
\end{table}

The results are shown by Table~\ref{tab:cost}.
Accuracy score and Composition loss for PCC are left blank since there is no efficient inference rules.
The first finding is that cost-sensitive algorithms (CLEMS, PCC, and CFT) generally perform better than non-cost-sensitive algorithms (BR and CC) across different criteria.
This validates the usefulness of cost-sensitivity for MLC algorithms.

For F1 score, Accuracy score, and Composition loss, CLEMS outperforms PCC and CFT in most cases.
The reason is that these criteria evaluate all the labels jointly, and CLEMS can globally locate the hidden structure of labels to facilitate more effective learning, while PCC and CFT are chain-based algorithms and only locally discover the relation between labels.
For Rank loss, PCC performs the best in most cases.
One possible reason is that Rank loss can be expressed
as a special weighted Hamming loss that does not require
globally locating the hidden structure.
Thus, chaining algorithms like PCC can still perform decently. Note, however, that CLEMS is often the second best
for Rank loss as well.


In summary, we identify two merits of CLEMS.
The first is that while PCC performs better on Rank loss, CLEMS is competitive for general cost-sensitivity and can be coupled with arbitrary criteria.
The second is that although CFT also shoots for general cost-sensitivity, CLEMS outperforms CFT in most cases for all criteria. The results make CLEMS a decent first-hand-choice for general CSMLC.

\paragraph{Performance on other criteria.}

So far, we have justified the benefits of CLEMS for directly optimizing towards the criterion of interest. Next, we discuss about whether CLEMS can be used to \textit{indirectly} optimize other criteria of interest, particularly when the criterion cannot be meaningfully expressed as the input to CLEMS. CLEMS follows the setting in Section~\ref{sec:csle} to accept \textit{example-based} criterion, which works on one label vector at a time. A more general type of criteria considers multiple or all the label vectors at the same time, called \textit{label-based} criteria. Two representative \textit{label-based} criteria are Micro F1 and Macro F1 \citep{Madjarov2012score2}, and will be studied next.
The former calculates the F1 score over all the label components of testing examples, and the latter averages the per-label F1 score across examples.
To the best of our knowledge, there is no cost-sensitive algorithms can handle arbitrary \textit{label-based} criteria. 

Another criterion that we will study is subset accuracy \citep{Madjarov2012score2}. It can be expressed as an \textit{example-based} criterion with two possible values: whether the label vector is completely correct or not. The criterion is very strict and does not come with trade-off on big or small prediction errors. Thus, it is generally not meaningful to feed the criterion directly to CLEMS or other CSMLC algorithms.

Next, we demonstrate how CLEMS can indirectly optimize Micro/Macro F1 score and subset accuracy when fed with other criteria as inputs.
We consider 6 pre-divided datasets (emotions, scene, yeast, medical, enron, and Corel5k) as used by \citet{Madjarov2012score2}.
We consider two baseline algorithms (BR and CC), CLEMS with three different input criteria (F1 score, Accuracy score, Rank loss), and PCC with two different criteria (F1 score and Rank loss) that come with efficient inference rules.
The results are shown in Table~\ref{tab:label}.

\begin{table}[!t]
	\centering
	\caption{Comparison for other criteria (best marked in bold)}
	\label{tab:label}
	\tiny
	\begin{tabular}{ccccccccc}
		\hline
		\multirow{2}{*}{Dataset} & \multirow{2}{*}{Criterion} & \multirow{2}{*}{BR} & \multirow{2}{*}{CC} & CLEMS& CLEMS & CLEMS & PCC & PCC \\
		& & & & (F1) & (Acc.) & (Rank.) & (F1) & (Rank.) \\
		\hline
		\multirow{3}{*}{emotions}	& Macro F1 ($\uparrow$) & $0.673$ & $0.682$ & $0.703$ & $\bf0.711$ & $0.708$ & $0.700$ & $0.698$ \\
									& Micro F1 ($\uparrow$) & $0.672$ & $0.708$ & $0.705$ & $\bf0.717$ & $0.708$ & $0.679$ & $0.700$ \\
									& Subset Acc. ($\uparrow$)& $0.282$ & $0.317$ & $0.248$ & $\bf0.337$ & $0.262$ & $0.277$ & $0.248$ \\
		\hline
		\multirow{3}{*}{medical}	& Macro F1 ($\uparrow$) & $0.345$ & $0.364$ & $0.395$ & $\bf0.424$ & $0.408$ & $0.361$ & $0.188$ \\
									& Micro F1 ($\uparrow$) & $0.692$ & $0.699$ & $0.757$ & $\bf0.788$ & $0.657$ & $0.586$ & $0.316$ \\
									& Subset Acc. ($\uparrow$)& $0.490$ & $0.511$ & $0.598$ & $\bf0.673$ & $0.393$ & $0.348$ & $0.024$ \\
		\hline
		\multirow{3}{*}{enron}		& Macro F1 ($\uparrow$) & $0.105$ & $0.114$ & $0.163$ & $0.148$ & $\bf0.223$ & $0.214$ & $0.222$ \\
									& Micro F1 ($\uparrow$) & $0.453$ & $0.477$ & $\bf0.559$ & $0.531$ & $0.501$ & $0.511$ & $0.420$ \\
									& Subset Acc. ($\uparrow$)& $0.033$ & $0.036$ & $\bf0.073$ & $\bf0.073$ & $0.007$ & $0.029$ & $0.000$ \\
		\hline
		\multirow{3}{*}{scene}		& Macro F1 ($\uparrow$) & $0.693$ & $0.682$ & $0.766$ & $\bf0.783$ & $0.766$ & $0.748$ & $0.698$ \\
									& Micro F1 ($\uparrow$) & $0.690$ & $0.705$ & $0.736$ & $\bf0.775$ & $0.733$ & $0.727$ & $0.700$ \\
									& Subset Acc. ($\uparrow$)& $0.533$ & $0.560$ & $0.520$ & $\bf0.704$ & $0.526$ & $0.528$ & $0.278$ \\
		\hline
		\multirow{3}{*}{yeast}		& Macro F1 ($\uparrow$) & $0.450$ & $0.458$ & $0.496$ & $0.492$ & $\bf0.506$ & $0.464$ & $0.474$ \\
									& Micro F1 ($\uparrow$) & $0.641$ & $0.644$ & $0.674$ & $\bf0.682$ & $0.672$ & $0.605$ & $0.665$ \\
									& Subset. Acc. ($\uparrow$)& $0.164$ & $\bf0.220$ & $0.150$ & $0.214$ & $0.127$ & $0.121$ & $0.132$ \\
		\hline
		\multirow{3}{*}{Corel5k}	& Macro F1 ($\uparrow$) & $0.038$ & $0.055$ & $0.073$ & $\bf0.092$ & $0.069$ & $0.026$ & $0.042$ \\
									& Micro F1 ($\uparrow$) & $0.072$ & $0.071$ & $0.264$ & $\bf0.272$ & $0.255$ & $0.197$ & $0.091$ \\
									& Subset. Acc. ($\uparrow$)& $0.000$ & $0.002$ & $0.020$ & $\bf0.034$ & $0.020$ & $0.000$ & $0.000$ \\
		\hline

	\end{tabular}
\end{table}

From the table, we observe that when selecting a proper criterion as the input of CSMLC algorithms (CLEMS or PCC), they
can readily perform better than the baseline algorithms.
The results justify the value of the CSMLC algorithms beyond handling \textit{example-based} criteria.
In particular, the cost input to CSMLC algorithms act as a tunable parameter towards optimizing other true criteria of interests.
We also observe that CLEMS, especially CLEMS-Acc, performs better on the three criteria than PCC in the most datasets, which
again validate the usefulness of CLEMS. An interesting future direction is whether CLEMS can be further extended to
achieve cost-sensitivity for \textit{label-based} criteria.

\section{Conclusion}
\label{sec:final}
We propose a novel cost-sensitive label embedding algorithm called cost-sensitive label embedding with multidimensional scaling (CLEMS). CLEMS successfully embeds the label information and cost information into an arbitrary-dimensional hidden structure by the classic multidimensional scaling approach for manifold learning, and handles asymmetric cost functions with our careful design of the mirroring trick.
With the embedding, CLEMS can make cost-sensitive predictions efficiently and effectively by decoding to the nearest neighbor within a proper candidate set. The empirical results demonstrate that CLEMS is superior to state-of-the-art label embedding algorithms across different cost functions. To the best of our knowledge, CLEMS is the very first algorithm that achieves cost-sensitivity within label embedding, and opens a promising future research direction of designing cost-sensitive label embedding algorithms using manifold learning approaches.

\begin{acknowledgements}
We thank the anonymous reviewers for valuable suggestions. This material is based upon work supported by the
Air Force Office of Scientific Research, Asian Office of Aerospace Research and Development (AOARD) under award
number FA2386-15-1-4012, and by the Ministry of Science and Technology of Taiwan under number MOST 103-2221-E-002-149-MY3.
\end{acknowledgements}

\bibliographystyle{spbasic}      
\bibliography{ecml2017clems}

\begin{thebibliography}{34}
\providecommand{\natexlab}[1]{#1}
\providecommand{\url}[1]{{#1}}
\providecommand{\urlprefix}{URL }
\expandafter\ifx\csname urlstyle\endcsname\relax
  \providecommand{\doi}[1]{DOI~\discretionary{}{}{}#1}\else
  \providecommand{\doi}{DOI~\discretionary{}{}{}\begingroup
  \urlstyle{rm}\Url}\fi
\providecommand{\eprint}[2][]{\url{#2}}

\bibitem[{Balasubramanian and Lebanon(2012)}]{Balasubramanian2012landmark}
Balasubramanian K, Lebanon G (2012) The landmark selection method for multiple
  output prediction. In: ICML

\bibitem[{Barut{\c{c}}uoglu et~al(2006)Barut{\c{c}}uoglu, Schapire, and
  Troyanskaya}]{Barutcuoglu2006gene}
Barut{\c{c}}uoglu Z, Schapire RE, Troyanskaya OG (2006) Hierarchical
  multi-label prediction of gene function. Bioinformatics 22(7):830--836

\bibitem[{Bhatia et~al(2015)Bhatia, Jain, Kar, Varma, and
  Jain}]{Bhatia2015sleec}
Bhatia K, Jain H, Kar P, Varma M, Jain P (2015) Sparse local embeddings for
  extreme multi-label classification. In: NIPS, pp 730--738

\bibitem[{Bi and Kwok(2013)}]{Bi2013cssp}
Bi W, Kwok JT (2013) Efficient multi-label classification with many labels. In:
  ICML, pp 405--413

\bibitem[{Breiman(2001)}]{Breiman2001rfr}
Breiman L (2001) Random forests. Machine Learning 45(1):5--32

\bibitem[{Carneiro et~al(2007)Carneiro, Chan, Moreno, and
  Vasconcelos}]{Carneiro2007image}
Carneiro G, Chan AB, Moreno PJ, Vasconcelos N (2007) Supervised learning of
  semantic classes for image annotation and retrieval. {IEEE} Transactions on
  Pattern Analysis and Machine Intelligence 29(3):394--410

\bibitem[{Chen and Lin(2012)}]{Chen2012cplst}
Chen YN, Lin HT (2012) Feature-aware label space dimension reduction for
  multi-label classification. In: NIPS, pp 1538--1546

\bibitem[{De{ }Leeuw(1977)}]{De1977smacof}
De{ }Leeuw J (1977) Applications of convex analysis to multidimensional
  scaling. Recent Developments in Statistics pp 133--145

\bibitem[{Dembczynski et~al(2010)Dembczynski, Cheng, and
  H{\"{u}}llermeier}]{Dembczynski2010pcc}
Dembczynski K, Cheng W, H{\"{u}}llermeier E (2010) Bayes optimal multilabel
  classification via probabilistic classifier chains. In: ICML, pp 279--286

\bibitem[{Dembczynski et~al(2011)Dembczynski, Waegeman, Cheng, and
  H{\"{u}}llermeier}]{Dembczynski2011pcc2}
Dembczynski K, Waegeman W, Cheng W, H{\"{u}}llermeier E (2011) An exact
  algorithm for {F}-measure maximization. In: NIPS, pp 1404--1412

\bibitem[{Ferng and Lin(2013)}]{Ferng2013ecc}
Ferng CS, Lin HT (2013) Multilabel classification using error-correcting codes
  of hard or soft bits. {IEEE} Transactions on Neural Networks and Learning
  Systems 24(11):1888--1900

\bibitem[{Hsu et~al(2009)Hsu, Kakade, Langford, and Zhang}]{Hsu2009cs}
Hsu D, Kakade S, Langford J, Zhang T (2009) Multi-label prediction via
  compressed sensing. In: NIPS, pp 772--780

\bibitem[{Kapoor et~al(2012)Kapoor, Viswanathan, and Jain}]{Kapoor2012bcs}
Kapoor A, Viswanathan R, Jain P (2012) Multilabel classification using bayesian
  compressed sensing. In: NIPS, pp 2654--2662

\bibitem[{Kruskal(1964)}]{kruskal1964mds}
Kruskal JB (1964) Multidimensional scaling by optimizing goodness of fit to a
  nonmetric hypothesis. Psychometrika 29(1):1--27

\bibitem[{Li and Lin(2014)}]{Li2014cft}
Li CL, Lin HT (2014) Condensed filter tree for cost-sensitive multi-label
  classification. In: ICML, pp 423--431

\bibitem[{Lin et~al(2014)Lin, Ding, Hu, and Wang}]{Lin2014faie}
Lin Z, Ding G, Hu M, Wang J (2014) Multi-label classification via feature-aware
  implicit label space encoding. In: ICML, pp 325--333

\bibitem[{Lo et~al(2011)Lo, Wang, Wang, and Lin}]{Lo2011csrakel1}
Lo HY, Wang JC, Wang HM, Lin SD (2011) Cost-sensitive multi-label learning for
  audio tag annotation and retrieval. {IEEE} Transactions on Multimedia
  13(3):518--529

\bibitem[{Lo et~al(2014)Lo, Lin, and Wang}]{Lo2014csrakel2}
Lo HY, Lin SD, Wang HM (2014) Generalized k-labelsets ensemble for multi-label
  and cost-sensitive classification. {IEEE} Transactions on Knowledge and Data
  Engineering 26(7):1679--1691

\bibitem[{Madjarov et~al(2012)Madjarov, Kocev, Gjorgjevikj, and
  Dzeroski}]{Madjarov2012score2}
Madjarov G, Kocev D, Gjorgjevikj D, Dzeroski S (2012) An extensive experimental
  comparison of methods for multi-label learning. Pattern Recognition
  45(9):3084--3104

\bibitem[{Pedregosa et~al(2011)Pedregosa, Varoquaux, Gramfort, Michel, Thirion,
  Grisel, Blondel, Prettenhofer, Weiss, Dubourg, Vanderplas, Passos,
  Cournapeau, Brucher, Perrot, and Duchesnay}]{Pedregosa2011sklearn}
Pedregosa F, Varoquaux G, Gramfort A, Michel V, Thirion B, Grisel O, Blondel M,
  Prettenhofer P, Weiss R, Dubourg V, Vanderplas J, Passos A, Cournapeau D,
  Brucher M, Perrot M, Duchesnay E (2011) Scikit-learn: Machine learning in
  {P}ython. Journal of Machine Learning Research 12:2825--2830

\bibitem[{Read et~al(2011)Read, Pfahringer, Holmes, and Frank}]{Read2011cc}
Read J, Pfahringer B, Holmes G, Frank E (2011) Classifier chains for
  multi-label classification. Machine Learning 85(3):333--359

\bibitem[{Read et~al(2016)Read, Reutemann, Pfahringer, and
  Holmes}]{Read2016meka}
Read J, Reutemann P, Pfahringer B, Holmes G (2016) {MEKA}: A
  multi-label/multi-target extension to {Weka}. Journal of Machine Learning
  Research 17(21):1--5

\bibitem[{Sch{\"o}lkopf et~al(1998)Sch{\"o}lkopf, Smola, and
  M{\"u}ller}]{scholkopf1998kpca}
Sch{\"o}lkopf B, Smola A, M{\"u}ller K (1998) Nonlinear component analysis as a
  kernel eigenvalue problem. Neural computation 10(5):1299--1319

\bibitem[{Sun et~al(2011)Sun, Ji, and Ye}]{Sun2011cca}
Sun L, Ji S, Ye J (2011) Canonical correlation analysis for multilabel
  classification: {A} least-squares formulation, extensions, and analysis.
  {IEEE} Transactions on Pattern Analysis and Machine Intelligence
  33(1):194--200

\bibitem[{Tai and Lin(2012)}]{Tai2012plst}
Tai F, Lin HT (2012) Multilabel classification with principal label space
  transformation. Neural Computation 24(9):2508--2542

\bibitem[{Trohidis et~al(2008)Trohidis, Tsoumakas, Kalliris, and
  Vlahavas}]{Trohidis2008music}
Trohidis K, Tsoumakas G, Kalliris G, Vlahavas IP (2008) Multi-label
  classification of music into emotions. In: ISMIR, pp 325--330

\bibitem[{Tsoumakas and Katakis(2007)}]{Tsoumakas2007br}
Tsoumakas G, Katakis I (2007) Multi-label classification: An overview.
  International Journal of Data Warehousing and Mining 3(3):1--13

\bibitem[{Tsoumakas et~al(2010)Tsoumakas, Katakis, and
  Vlahavas}]{Tsoumakas2010score}
Tsoumakas G, Katakis I, Vlahavas IP (2010) Mining multi-label data. In: Data
  Mining and Knowledge Discovery Handbook, pp 667--685

\bibitem[{Tsoumakas et~al(2011{\natexlab{a}})Tsoumakas, Katakis, and
  Vlahavas}]{Tsoumakas2011rakel}
Tsoumakas G, Katakis I, Vlahavas IP (2011{\natexlab{a}}) Random k-labelsets for
  multilabel classification. {IEEE} Transactions on Knowledge and Data
  Engineering 23(7):1079--1089

\bibitem[{Tsoumakas et~al(2011{\natexlab{b}})Tsoumakas, Spyromitros-Xioufis,
  Vilcek, and Vlahavas}]{Tsoumakas2011mulan}
Tsoumakas G, Spyromitros-Xioufis E, Vilcek J, Vlahavas IP (2011{\natexlab{b}})
  {MULAN:} {A} java library for multi-label learning. Journal of Machine
  Learning Research 12:2411--2414

\bibitem[{Weston et~al(2002)Weston, Chapelle, Vapnik, Elisseeff, and
  Sch{\"o}lkopf}]{weston2002kde}
Weston J, Chapelle O, Vapnik V, Elisseeff A, Sch{\"o}lkopf B (2002) Kernel
  dependency estimation. In: NIPS, pp 873--880

\bibitem[{Yeh et~al(2017)Yeh, Wu, Ko, and Wang}]{Yeh2017c2ae}
Yeh CK, Wu WC, Ko WJ, Wang YCF (2017) Learning deep latent space for
  multi-label classification. In: AAAI, pp 2838--2844

\bibitem[{Yu et~al(2014)Yu, Jain, Kar, and Dhillon}]{Yu2014leml}
Yu HF, Jain P, Kar P, Dhillon IS (2014) Large-scale multi-label learning with
  missing labels. In: ICML, pp 593--601

\bibitem[{Zhang and Schneider(2011)}]{Zhang2011ccaoc}
Zhang Y, Schneider JG (2011) Multi-label output codes using canonical
  correlation analysis. In: AISTATS, pp 873--882

\end{thebibliography}

\end{document}